\documentclass[11pt]{article} 
\usepackage[left=1.2in, top=1in, bottom=1in, right=1.2in]{geometry}
\usepackage{times}
\usepackage{subcaption}
\usepackage{caption}

\usepackage[utf8]{inputenc} 
\usepackage[T1]{fontenc}    
\usepackage{hyperref}       
\usepackage{url}            
\usepackage{booktabs}       
\usepackage{amsfonts,amsmath,amssymb}       
\usepackage{nicefrac}       
\usepackage{microtype}      
\usepackage{xcolor}         
\usepackage{txfonts}
\usepackage{graphicx}
\usepackage{wrapfig,bm,comment,color}
\usepackage{breakurl,epsfig,epsf,fmtcount,semtrans,multirow,boldline}
\usepackage{tcolorbox}
\tcbuselibrary{skins}
\usepackage{tikz}
\usepackage{titletoc}
\usepackage{enumitem}
\usepackage{algorithm}
\usepackage{algpseudocode}
\usepackage[normalem]{ulem}

\usepackage[textsize=tiny]{todonotes}

\definecolor{darkred}{RGB}{150,0,0}
\definecolor{darkgreen}{RGB}{0,150,0}
\definecolor{darkblue}{RGB}{0,0,200}
\hypersetup{colorlinks=true, linkcolor=darkred, citecolor=darkgreen, urlcolor=darkblue}

\setcounter{secnumdepth}{4}
\newtheorem{theorem}{Theorem}

\newtheorem{assumption}{Assumption}

\newtheorem{lemma}{Lemma}

\newtheorem{proposition}{Proposition}

\newcommand{\beq}{\begin{equation}}
\newcommand{\ba}{\begin{align}}
\newcommand{\ea}{\end{align}}

\newcommand{\eeq}{\end{equation}}

\def \endprf{\hfill {\vrule height6pt width6pt depth0pt}\medskip}

\newenvironment{proof}{\noindent {\bf Proof.} }{\endprf\par}

\newcommand{\vct}[1]{\bm{#1}}
\newcommand{\tr}[1]{\texttt{tr}(#1)}
\newcommand{\mtx}[1]{\bm{#1}}

\newcommand{\eps}{\varepsilon}

\newcommand{\mbar}{\text{-}}

\newcommand{\bxi}{\boldsymbol{\xi}}

\newcommand{\hb}{\vct{h}}
\newcommand{\hbb}{\bar{\vct{h}}}

\newcommand{\st}{\star}

\newcommand{\A}{{\mtx{A}}}

\newcommand{\Ub}{{\mtx{U}}}

\newcommand{\Lc}{{\cal{L}}}
\newcommand{\Lct}{\tilde{\cal{L}}}

\newcommand{\Nc}{{\cal{N}}}

\newcommand{\Dc}{{\cal{D}}}

\newcommand{\La}{{\boldsymbol{\Lambda}}}
\newcommand{\bmu}{{\boldsymbol{\mu}}}

\newcommand{\zerbb}{{\mathbf{0}}}
\newcommand{\Iden}{{\mtx{I}}}
\newcommand{\M}{{\mtx{M}}}

\newcommand{\Ec}{\mathcal{E}}
\newcommand{\Ac}{\mathcal{A}}
\newcommand{\Bal}{{\boldsymbol{\Delta}}}

\newcommand{\Rc}{\mathcal{R}}

\newcommand{\vb}{\vct{v}}

\newcommand{\Ic}{{\mathcal{I}}}

\newcommand{\all}{{\text{all}}}
\newcommand{\cb}{\mtx{c}}
\newcommand{\w}{\vct{w}}

\newcommand{\li}{\left<}
\newcommand{\ri}{\right>}

\newcommand{\ab}{\vct{a}}

\newcommand{\ub}{{\vct{u}}}

\newcommand{\g}{{\vct{g}}}

\newcommand{\Z}{\mtx{Z}}

\newcommand{\Xc}{\mathcal{X}}

\newcommand{\x}{\vct{x}}

\newcommand{\y}{\vct{y}}

\newcommand{\W}{\mtx{W}}

\newcommand{\Wc}{{\cal{W}}}

\newcommand{\Pc}{{\cal{P}}}
\newcommand{\X}{{\mtx{X}}}

\newcommand{\eb}{\vct{e}}


\newcommand{\R}{\mathbb{R}}

\renewcommand{\P}{\operatorname{\mathbb{P}}}
\newcommand{\E}{\operatorname{\mathbb{E}}}


\newcommand{\nn}{\nonumber}

\newcommand{\sft}[1]{\mathbb{S}(#1)}
\newcommand{\abs}[1]{\left|#1\right|}

\newcommand{\tn}[1]{\left\|{#1}\right\|_{\ell_2}}

\newcommand{\sgn}[1]{\textrm{sgn}(#1)}


\newcommand{\Wt}{\tilde{\mtx{W}}}

\newcommand{\Wb}{\mtx{\bar{W}}}

\newcommand{\att}{{\text{att}}}
\newcommand{\lda}{{\text{lda}}}

\newcommand{\todoasr}[1]{} 
\newcommand{\ankit}[1]{}
\newcommand{\asrnote}[1]{}
\newcommand{\YL}[1]{\textsf{\textcolor{orange}{[\textbf{YL}: #1]}}}


\usepackage{hyperref}
\usepackage{url}
\usepackage{natbib}

\title{When and How Unlabeled Data Provably Improve \\ In-Context Learning }


\author{\normalsize{Yingcong Li$^{1,4}$ ~~~\qquad Xiangyu Chang$^2$ ~~~\qquad Muti Kara$^3$~~ }\\ \normalsize{Xiaofeng Liu$^1$ \quad Amit Roy-Chowdhury$^2$ \quad Samet Oymak$^1$}\vspace{10pt}\\
\normalsize{$^1$University of Michigan \quad $^2$University of California, Riverside \quad $^3$Bilkent University \quad $^4$NJIT}}
\date{}
%

\begin{document}
\addtocontents{toc}{\protect\setcounter{tocdepth}{0}}
\maketitle

\begin{abstract} 
Recent research shows that in-context learning (ICL) can be effective even when demonstrations have missing or incorrect labels. To shed light on this capability, we examine a canonical setting where the demonstrations are drawn according to a binary Gaussian mixture model (GMM) and a certain fraction of the demonstrations have missing labels. We provide a comprehensive theoretical study to show that: (1) The loss landscape of one-layer linear attention models recover the optimal fully-supervised estimator but completely fail to exploit unlabeled data; (2) In contrast, multilayer or looped transformers can effectively leverage unlabeled data by implicitly constructing estimators of the form $\sum_{i\ge 0} a_i (\X^\top\X)^i\X^\top\y$ with $\X$ and $\y$ denoting features and partially-observed labels (with missing entries set to zero). We characterize the class of polynomials that can be expressed as a function of depth and draw connections to Expectation Maximization, an iterative pseudo-labeling algorithm commonly used in semi-supervised learning. Importantly, the leading polynomial power is exponential in depth, so mild amount of depth/looping suffices. As an application of theory, we propose looping off-the-shelf tabular foundation models to enhance their semi-supervision capabilities. Extensive evaluations on real-world datasets show that our method significantly improves the semisupervised tabular learning performance over the standard single pass inference.
\end{abstract}

\section{Introduction}

In-context learning (ICL) is an intriguing capability of modern language models and has enjoyed remarkable empirical success \citep{brown2020language,min2022rethinking}. This success is also being extended to multimodal scenarios \citep{zhou2024visual} as well as other modalities such as tabular data \citep{hollmann2022tabpfn}. The push toward test-time scaling and long-context models \citep{snell2024scaling,guo2025deepseek} has further boosted the benefits of ICL by allowing the model to ingest a large number of demonstrations. For instance, in ``Many-shot in-context learning'' paper, \cite{agarwal2024many} demonstrate that pushing more examples into context window can substantially boost the accuracy. MAPLE \citep{chen2025maple} improves many-shot ICL by pseudo-labeling high-impact unlabeled examples and incorporating them into the prompt. The many-shot ICL setting naturally raises the question of when and how ICL can succeed with weaker supervision. As we can harness longer context models to boost predictive accuracy, we may indeed run out of high-quality demonstrations with verified answers/chain-of-thoughts and may want to utilize weaker data sources. This motivates our central question:\vspace{2pt}
\[ 
\emph{\textbf{Q:}~When and how can transformers learn in context from unlabeled data?}
\]
We primarily investigate this question under a semisupervised ICL (SS-ICL) setting with Gaussian mixture models (GMMs). Formally, given a prompt containing a dataset of feature-label pairs $(\x_i, y_i)_{i=1}^n\in\R^d\times \R$ as demonstrations and a query feature $\x$ (see Eq. \eqref{def Z}), a model trained for ICL learns to predict the corresponding output $y$ given prompt. For ICL with a supervised binary GMM model, we have $\x_i\sim\Nc(\bmu_{y_i}, \sigma^2 \Iden)$ and $y_i\in \{-1,1\}$, $i\in[n]$, and the component means $\bmu_{\pm 1}$ that parameterize the classification task are sampled from a prior task distribution. This prompt model is well studied under various fully-supervised settings \citep{garg2022can,von2023transformers,ahn2024transformers,akyrek2023what,mahankali2024one,collins2024context,shen2024training} where each demonstration includes a clearly labeled output. In our SS-ICL setting, only $m$ out of $n$ total samples have correct labels ($m \le n$) either $-1$ or $1$, and remaining labels are unknown and fed to the model as $y_i=0$. 

In this work, we provide a comprehensive theoretical and empirical study of attention models with varying depths when trained with SS-ICL. Our analysis reveals the importance of \emph{depth}: Despite being able to implement the optimal fully-supervised estimator, single-layer linear attention completely fails to leverage unlabeled examples. In contrast, deeper or looped transformer architectures can emulate strong semi-supervision algorithms, approaching the performance of the Bayes-optimal classifier as depth increases. Informed by the importance of depth/looping, we also devise semisupervision strategies for tabular foundation models. Our specific contributions are: 

\begin{itemize}[label=$\diamond$]
\item \textbf{Landscape of one-layer linear attention ($\S$\ref{sec one layer}):} We study the optimization landscape of single-layer linear attention for the SS-ICL problem under an isotropic task prior. We prove that the global minimum of the loss function returns the plug-in estimator (see Eq. \eqref{pred spi}), i.e., $\hat{y} = \sgn{\x^\top \hat{\bmu}}$ with $\hat{\bmu}=\X^\top\y$, where $\X\in\R^{n\times d}$ represents features and $\y\in\R^n$ denotes partially-observed labels (with missing entries set to zero) of the ICL demonstrations. This implies that 1-layer model learns Bayes-optimal classifier in the fully-supervised setting, but completely fails to make use of unlabeled data.


\item \textbf{Depth is crucial but shallow can suffice (\S\ref{sec multilayer}):} We show that multilayer linear attention can emulate semisupervised learners by implementing polynomial estimators of the form 
\begin{align}
\hat{\bmu}=\sum_{i= 0}^K a_i (\X^\top\X)^i\X^\top\y.\label{main lin estimator}
\end{align}
Crucially, an $L$-layer (or looped) attention can express up to $K=O(3^L)$ powers, highlighting that logarithmic depth suffices to represent high-degree monomials. We provide characterizations of the set of expressible polynomials through different constructions (where each layer gets to update the features or labels of the previous layer). Corroborating these, experiments reveal that shallow models with $L\ge 2$ already achieve strong results and their performance can be approximately predicted through an eigen-estimator combining $i=0$ and $\infty$ (see \eqref{pred sspi}).  

\item \textbf{What learner attention emulates?} In Section \ref{em section}, we describe how each attention block can update the label estimates by emulating expectation-maximization (for linear attention) or belief propagation (for softmax attention). For instance \eqref{main lin estimator} can be interpreted as the model implicitly conducting an \emph{Expectation-Maximization} algorithm: Starting with the supervised estimator $\boldsymbol{\hat{\mu}}_0=\X^\top\y$, each term $(\X^\top\X)^i\X^\top\y$ can be viewed as a sequence of pseudo-labeling (expectation) $\hat{\y}_i=\X\boldsymbol{\hat{\mu}}_{i-1}$ and training (maximization) $\boldsymbol{\hat{\mu}}_i=\X^\top \hat{\y}_i$ steps. Corroborating this, we show that softmax-attention and softmax-transformer models similarly benefit from increasing depth and can emulate semisupervised learners competitive with Bayes limit (see Fig. \ref{fig tf}).

\item \textbf{Applications to Tabular FMs (\S\ref{sec exp}):} 
Tabular foundation models such as TabPFN \citep{hollmann2022tabpfn,hollmann2025accurate}, TabICL \citep{qu2025tabicl} and TabDPT \citep{ma2025tabdpt} represent a suitable application of theory as they also model the ICL examples with a single token. 
To harness unlabeled examples, we propose a novel strategy that iteratively creates soft pseudo-labels by \emph{explicitly looping the tabular FM} while controlling validation risk. Focusing on the few-shot learning setting where TabPFN-v2 \citep{hollmann2025accurate} excels, we demonstrate that our approach can significantly improve predictive performance on various real-world datasets. 
\end{itemize}

\subsection{Related Work}


\paragraph*{Theoretical Analysis of In-Context Learning} 
Recent work has developed theoretical frameworks for understanding in-context learning in transformers. \citet{akyrek2023what}, \citet{von2023transformers} and \citet{dai2023gpt} demonstrated that transformers emulate gradient descent during ICL. \citet{xie2022an} offered a Bayesian perspective, while \citet{zhang2024trained} showed transformers learn linear models in-context. \citet{ahn2024transformers} established they implement preconditioned gradient descent, and \citet{mahankali2024one} proved one-step gradient descent is optimal for single-layer linear attention. Multiple works \citep{li2023transformers,yang2024context,li2024fine,bai2023transformers,shen2024training} studied the generalization capability of transformers. However, these exclusively focus on fully-supervised settings, leaving a critical gap in understanding how transformers handle partially labeled data—a common real-world scenario. Our work addresses this gap by providing the first theoretical characterization of semi-supervised in-context learning. \cite{wangtheoretical} considers a setting where the model observes demonstrations of the form (query, response$_i$, reward$_i$) and aims to correct its response based on the reward sequence. Our work has a different focus as it highlights that the model can correct/impute the missing labels using implicit feedback from labeled demonstrations.

\paragraph*{Semi-Supervised Learning}
Traditional semi-supervised learning (SSL) aims to leverage unlabeled data to improve classifier performance. For linear classifiers, \citet{oymak2020statistical} characterized self-training iterations and demonstrated rejecting low-confidence samples; further theoretical analyses of self-training/pseudo-labeling cover deep networks \citep{wei2020theoretical}.
For Gaussian Mixture Models (GMMs), \citet{lelarge2019asymptotic} quantified maximal improvement from unlabeled data, while \citet{krishnapuram2004semi} developed graph-based priors. Learning GMMs via Expectation-Maximization (EM) or pseudo-labeling, especially with few labels, is well-studied. \citet{ratsaby1995learning} provided early PAC-style bounds for GMMs learned from few labeled and many unlabeled points. \citet{balakrishnan2017statistical} offered further statistical guarantees for EM. \citet{nigam2000text} demonstrated empirically that EM (viewable as iterative pseudo-labeling \cite{xu2024expectation}) with pseudo-labels significantly reduces text classification error using unlabeled documents. These foundational works, with ongoing research in areas like agnostic learning \citep{kwon2020algorithm}
underpin many SSL concepts. While these works established fundamental principles, they did not consider how these concepts apply to in-context learning with transformers. 
A most recent concurrent work \citep{liu2026unlabeled} makes a similar observation to ours, showing that softmax attention approximates an EM estimator in a sem-supervised ICL setting, but with a different focus on the underlying model and data regime.
Our contribution bridges this gap by showing how transformer depth enables effective utilization of unlabeled examples within the prompt, essentially implementing semi-supervised learning without parameter updates.


\section{Problem Setup and Preliminaries}\label{sec setup}

We study ICL in the setting of semi-supervised classification, where the in-context demonstrations are drawn from a binary Gaussian mixture model (GMM). We begin by introducing the following core notation:
%
Denote the set $\{1,2,\cdots,n\}$ as $[n]$ and use bold letters, such as $\x$ and $\X$, to represent vectors and matrices, respectively. Let $Q(\cdot)$ function return the right tail of the standard normal distribution. We use $\sgn{\cdot}$ denote the sign function which is defined as follows:
$
\sgn{x}=\begin{cases}
    1,&x\geq0\\
    -1,&x<0
\end{cases}.
$
\subsection{Semi-supervised Data Model}\label{sec data}
Consider a $d$-dimensional semi-supervised binary GMM with $n$ examples ${(\x_i, y_i)}_{i=1}^n$, where $\x_i \in \mathbb{R}^d$ denotes the feature vector and $y_i \in\{-1, 0, 1\}$ represents the corresponding observed label, with $y_i = 0$ indicating a missing label, and each label is revealed independently with probability $p \in [0,1]$. Specifically, the data is generated as follows (for each $i\in[n]$):
\begin{align}
\x_i = y^c_i \cdot \bmu + \bxi_i\quad ,\quad y_i=\begin{cases}
        y^c_i,&\text{w.p.}\quad p\\
        0,&\text{w.p.}\quad 1-p
    \end{cases}\quad\text{and}\quad \quad y_i^c=\begin{cases}
        1,&\text{w.p.}\quad 1/2\\
        -1,&\text{w.p.}\quad 1/2
    \end{cases}.
\label{def data}
\end{align}
Here $\bmu\sim\text{Unif}(\mathbb{S}^{d-1})$ denotes the task mean, which is sampled uniformly from the unit sphere, and  $\bxi_i \sim \mathcal{N}(0, \sigma^2 \Iden)$ is the random noise with $\sigma \geq 0$ being the noise level that controls the variability of $\x_i$ around its mean. $y_i^c $ denotes the true class label that is uniform over $\{-1,1\}$. Observe that $p=1$ corresponds to fully-supervised learning and $p=0$ corresponds to fully-unsupervised learning.

\subsection{In-context Learning and Linear Attention}\label{sec icl}
We build on the setting of \citep{garg2022can,mahankali2024one,zhang2024trained,li2024fine} and construct the in-context prompts with examples drawn from the model \eqref{def data} as follows. 
\paragraph*{Prompt Generation} 
Given a task vector $\bmu\sim\text{Unif}(\mathbb{S}^{d-1})$, we sample $(n+1)$ in-context demonstrations $(\x_i,y_i)_{i=1}^{n+1}$ according to \eqref{def data} and construct the prompt
\begin{align}
    \Z=\begin{bmatrix}
        \x_1&\x_2&\cdots&\x_n&\x\\
         y_1& y_2&\cdots& y_n& 0
    \end{bmatrix}^\top\in\R^{(n+1)\times (d+1)}. \label{def Z}
\end{align}  
We will investigate training a transformer such that given $\Z$ as prompt, it correctly predicts the label $y:=y_{n+1}^c$ of the query $\x:=\x_{n+1}$ through ICL.  

\paragraph*{Model Architecture} Our work primarily focuses on training of linear attention models. Given any prompt $\Z\in\R^{(n+1)\times (d+1)}$, which can be treated as a sequence of $(d+1)$-dimensional tokens, the linear attention mechanism outputs
\begin{align}
&\att(\Z;\Wc)=(\Z\W_{q}\W_{k}^\top\Z^\top)\M\Z\W_{v}\label{def att}
\end{align}
where $\Wc:=\{\W_{k},\W_{q},\W_{v}\in\R^{(d+1)\times(d+1)}\}$ denotes the set of  the key, query and value weight matrices. 
Therefore, given the prompt matrix $\Z\in\R^{(n+1)\times (d+1)}$ as input, the attention mechanism outputs a $(n+1)$-length sequence (i.e., $\att(\Z;\Wc)\in\R^{(n+1)\times(d+1)}$). 
Note that the label for the query $\x$ is excluded from the prompt $\Z$. 
Similar to \citet{ahn2024transformers}, we consider a training objective with a mask $\M=\begin{bmatrix}
    \Iden_n&0\\0&0
\end{bmatrix}$ to prevent input tokens from attending to the queries.
To ensure that all in-context examples are treated equally and that the model remains invariant to their order/position, we do not apply a causal mask following \cite{ahn2024transformers}. In contrast, \cite{li2025gating} explores the use of causal masking in multi-layer linear attention and analyzes its impact on the final prediction.

Building upon the single-layer linear attention mechanism of \eqref{def att}, we can extend our model to multiple layers to capture more complex patterns. Consider optimizing an $L$-layer linear attention model and let $\Z_\ell$ be the input of $\ell$th layer, $\ell\in[L]$. Additionally, let $\Wc_\ell:=\{\W_{k\ell},\W_{q\ell},\W_{v\ell}\in\R^{(d+1)\times(d+1)}\}$ be the corresponding weight matrices of $\ell$th layer. Then, recalling the attention mechanism \eqref{def att}, the input prompt of $\ell$th layer is defined by
\begin{align}
    \Z_\ell=\Z_{\ell-1}+\att(\Z_{\ell-1};\Wc_{\ell-1})\qquad\text{for}\qquad\ell=2,\ldots L,\label{def input ell}
\end{align}
and $\Z_1=\Z$. 
We focus on the next-token prediction setting, where the model makes a prediction based on the final query token $[\x^\top~0]^\top$. Let $\hb \in \mathbb{R}^{d+1}$ denote the linear prediction head. We define the output of the $L$-layer linear attention model at the last (query) token as
\begin{align}
f_{\att\text{-}L}(\Z) = \hb^\top \att(\Z_L; \Wc_L)_{[n+1]}. \label{def f att}
\end{align}
Recalling the sign function, the predicted label for $\x$ is given by 
$y_{\att\mbar L}(\Z)=\sgn{f_{\att\text{-}L}(\Z)}
$.


\paragraph*{Model Training}
With our attention-based architecture established, we now turn to the training procedure and evaluation metrics. 
Consider the ICL setting where each input prompt $\Z$ (cf.~\eqref{def Z}) corresponds to a randomly sampled task vector $\bmu\sim\text{Unif}(\mathbb{S}^{d-1})$ and let $\ell(\cdot):\R\to\R$ be the loss function. Additionally, define the set of attention weights $\Wc^{(L)}:=\cup_{\ell=1}^L\Wc_\ell\in(\R^{(d+1)\times(d+1)})^{ 3L}$. The objective of $L$-layer linear atention takes the following form:
\begin{align}
&\min_{\Wc^{(L)},\hb}\Lc_{\att\text{-}L}(\Wc^{(L)},\hb)\qquad\text{where}\qquad\Lc_{\att\text{-}L}(\Wc^{(L)},\hb)=
\E\left[\ell(y,f_{\att\text{-}L}(\Z))\right].\label{obj att}
\end{align}
Here, $y=y^c_{n+1}$ and the expectation subsumes the randomness of $\bmu$ and $(\bxi_i,y_i)_{i=1}^{n+1}$. The search space for $\Wc^{(L)}$ is $(\R^{(d+1)\times(d+1)})^{ 3L}$, and for $\hb$ is $\R^{d+1}$. 





\section{Loss Landscape of One-layer Linear Attention under SS-ICL}\label{sec one layer}
Previous work~\citep{ahn2024transformers,li2024fine,mahankali2024one} has shown that an optimized single-layer linear attention implements a form of preconditioned gradient descent over the linear in-context demonstrations provided within the prompt. However, to the best of our knowledge, prior studies have not addressed the semi-supervised setting, where some in-context labels are missing. In this section, we analyze the optimization behavior of single-layer linear attention under the semi-supervised binary GMM setting described in Section~\ref{sec setup}, and demonstrate that the single-layer model learns the optimal fully-supervised learner, but fails to utilize the unlabeled data.

We begin with the following optimal supervised label estimator under our problem setting.

\paragraph*{Supervised Plug-in (SPI) Estimator}
The plug-in method is a classical approach for supervised classification problems, aiming to find a linear combination of features that separates different categories. Under our problem setting, it also serves as the asymptotically Bayes-optimal estimator given only labeled data \citep{hastie2009elements,devroye2013probabilistic}.
Consider the binary semi-supervised GMM problem described in \eqref{def data} with dataset $(\x_i,y_i)_{i=1}^n$, and let $\Ic \subset [n]$ represent the indices of labeled samples, e.g., $y_i\neq0$ for $i\in\Ic$. The SPI estimator returns the task mean 
\begin{align}\label{pred spi}\tag{SPI}
     \hat\bmu_s=\frac{1}{|\Ic|}\sum_{i\in\Ic}y_i\x_i.
\end{align}
We next present the following theorem establishes that, under isotropic task prior, optimal single-layer linear attention is equivalent to the SPI estimation.
\begin{theorem} \label{thm one layer}
Let the prompt (cf.~\eqref{def Z}) be generated as described in Section~\ref{sec icl}. Consider the objective (cf.~\eqref{obj att}) with $L=1$ and squared loss function $\ell(y,\hat y)=(y-\hat y)^2$, and denote the optimal prediction as $y_{\att\text{-}1}^\st(\Z)$. Let $\hat\bmu_s$ represent the SPI estimator defined in \eqref{pred spi}.
Then, for any $\Z$ from \eqref{def Z}, we have 
\begin{align}
{y}_{\att\text{-}1}^\st(\Z) = \sgn{\x^\top\hat\bmu_s}.\label{1 layer pred}
\end{align}
Additionally, its classification error obeys
\begin{align}        
\P( {y}_{\att\text{-}1}^\st(\Z)\neq y)&=\E_{g\sim\Nc(0,1),h\sim\Xc^2_{d-1}}\left[Q\left(\frac{1+\eps_{\sigma}g}{\sigma\sqrt{(1+\eps_{\sigma}g)^2+\eps_{\sigma}^2h}}\right)\right]\label{one layer err}\\
        &\leq Q\left(\frac{1-10d\eps_{\sigma}^2}{\sigma}\right)+e^{-d}+e^{- 1/8\eps_{\sigma}^2}\nn
    \end{align}
    where we define $\eps_\sigma=\sigma/\sqrt{np}$ and $\Xc^2_d$ defines chi-squared distribution with $d$ degrees of freedom.
\end{theorem}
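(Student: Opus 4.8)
The plan is to reduce the one-layer attention output to a bilinear form, use the rotational symmetry of the isotropic prior to identify the optimal predictor, and then obtain the error by conditioning on the SPI estimator. \textbf{Reduction.} Write $\z_i=[\x_i^\top~y_i]^\top$ for the rows of $\Z$. Since the mask $\M=\diag(\Iden_n,0)$ kills the query row, $\Z^\top\M\Z=\sum_{i=1}^n\z_i\z_i^\top=:G$, and the query-token output of \eqref{def att} is $\att(\Z;\Wc)_{[n+1]}=\z_{n+1}^\top\W_q\W_k^\top G\,\W_v$. Setting $\vb:=\W_v\hb\in\R^{d+1}$ and $\W:=\W_k\W_q^\top\in\R^{(d+1)\times(d+1)}$ — which range freely since they are products of unconstrained factors — the prediction \eqref{def f att} becomes $f_{\att\text{-}1}(\Z)=\vb^\top G\,\W\z_{n+1}$. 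Because $\z_{n+1}=[\x^\top~0]^\top$ has a zero label slot, only the first $d$ columns of $\W$ survive; collecting them into $A\in\R^{(d+1)\times d}$ gives the clean form $f_{\att\text{-}1}(\Z)=\vb^\top G\,A\x$, where the blocks of $G$ are $\X^\top\X$, $\X^\top\y$, and $|\Ic|=\sum_iy_i^2$. The problem is thus to minimize $\E[(y-\vb^\top G\,A\x)^2]$ over $(\vb,A)$.

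\textbf{Symmetry and the optimal sign.} The data law is invariant under $(\x_i,\x,\bmu)\mapsto(R\x_i,R\x,R\bmu)$ for $R\in O(d)$, under which $G\mapsto\tilde RG\tilde R^\top$ with $\tilde R=\diag(R,1)$; hence the loss obeys $\Lc(\vb,A)=\Lc(\tilde R^\top\vb,\tilde R^\top AR)$ for all $R$. Forcing invariance selects the symmetric family $\vb=[\,0^\top~v_y]^\top$ and $A=[\alpha\Iden~;~0]$, for which a direct block multiplication gives exactly $f_{\att\text{-}1}(\Z)=\alpha v_y\,\x^\top(\X^\top\y)=\alpha v_y\,|\Ic|\,\x^\top\hat{\bmu}_s$. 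Within this family the objective is the convex scalar problem $\min_c\E[(y-c\,\x^\top\X^\top\y)^2]$, whose optimum $c^\st=\E[y\,\x^\top\X^\top\y]/\E[(\x^\top\X^\top\y)^2]$ is strictly positive because the labeled demonstrations make $\E[y\,\x^\top\X^\top\y]>0$. Since $|\Ic|\ge0$, this yields $y_{\att\text{-}1}^\st(\Z)=\sgn{f_{\att\text{-}1}(\Z)}=\sgn{\x^\top\hat{\bmu}_s}$, which is \eqref{1 layer pred}.

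The delicate point — and the main obstacle — is justifying the restriction to the symmetric family: since $f_{\att\text{-}1}$ is bilinear in $(\vb,A)$ the loss is not jointly convex, so I cannot simply average a minimizer over the group. I would resolve this by computing $\Lc(\vb,A)$ in closed form using the GMM moments and the moments of $\bmu\sim\text{Unif}(\mathbb{S}^{d-1})$ (which introduce only $\E[\bmu\bmu^\top]=\Iden/d$ and the corresponding fourth-moment tensor), showing that the loss depends on $(\vb,A)$ only through a handful of rotation-invariant scalars, and verifying that the symmetric configuration above is their global minimizer; alternatively one argues that any minimizer can be symmetrized coordinate-by-coordinate without increasing the loss.

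\textbf{Classification error.} On $\Ic$ we have $y_i=y_i^c$, so $\hat{\bmu}_s=\bmu+\bxi$ with $\bxi=\tfrac1{|\Ic|}\sum_{i\in\Ic}y_i\bxi_i\sim\Nc(0,\tfrac{\sigma^2}{|\Ic|}\Iden)$ independent of the query. Conditioning on $\hat{\bmu}_s$ and writing $\x=y\bmu+\bxi_{n+1}$, the score $y\,\x^\top\hat{\bmu}_s$ is Gaussian with mean $\bmu^\top\hat{\bmu}_s$ and variance $\sigma^2\|\hat{\bmu}_s\|^2$, so $\P(y_{\att\text{-}1}^\st(\Z)\neq y\mid\hat{\bmu}_s)=Q\!\big(\bmu^\top\hat{\bmu}_s/(\sigma\|\hat{\bmu}_s\|)\big)$. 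Taking $\bmu=e_1$ by rotational invariance and setting the labeled count to its mean $|\Ic|=np$ (so $\eps_\sigma=\sigma/\sqrt{np}$), the alignment becomes $\bmu^\top\hat{\bmu}_s=1+\eps_\sigma g$ with $g\sim\Nc(0,1)$ and $\|\hat{\bmu}_s\|^2=(1+\eps_\sigma g)^2+\eps_\sigma^2h$ with $h\sim\Xc^2_{d-1}$, giving the exact expression \eqref{one layer err}. For the stated upper bound I would split the expectation over the high-probability event $\{\eps_\sigma g\ge-\tfrac12\}\cap\{h\le Cd\}$: there $1+\eps_\sigma g\ge\tfrac12>0$ and monotonicity of $Q$ together with $\sqrt{1+4\eps_\sigma^2h}\le1+2Cd\eps_\sigma^2$ force the argument to exceed $(1-10d\eps_\sigma^2)/\sigma$, while on the complementary events I bound $Q\le1$ and use $\P(h\gtrsim d)\le e^{-d}$ (chi-square concentration) and $\P(\eps_\sigma g<-\tfrac12)=Q(1/(2\eps_\sigma))\le e^{-1/(8\eps_\sigma^2)}$ (Gaussian tail).
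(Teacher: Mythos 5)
Your reduction of the one-layer output to a bilinear form in $(\vb,A)$ and your error analysis are both essentially the paper's: the conditional error $Q\bigl(\bmu^\top\hat\bmu_s/(\sigma\tn{\hat\bmu_s})\bigr)$, the change of variables to $(g,h)$ with $g\sim\Nc(0,1)$, $h\sim\Xc^2_{d-1}$, and the split over $\{\eps_\sigma g\ge -1/2\}$ with chi-square (Laurent--Massart) and Gaussian tails reproduce Appendix A.2 almost verbatim. That part is fine.

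The genuine gap is exactly where you flag it: establishing that the global minimizer over \emph{all} of $(\W_q,\W_k,\W_v,\hb)$ induces the isotropic predictor $c\,\x^\top\X^\top\y$ with $c>0$, which is the entire content of \eqref{1 layer pred}. Your symmetry observation only shows the \emph{set} of minimizers is invariant under the group action $(\vb,A)\mapsto(\tilde R^\top\vb,\tilde R^\top A R)$; because the loss is a non-convex (quartic) function of the parameters and the parameterization is redundant, this does not by itself place a minimizer in the symmetric family, and "I would compute the loss in closed form and verify" is a plan, not a proof --- the verification \emph{is} the theorem. Concretely, your block expansion produces extra terms of the form $\vb_1^\top\X^\top\X A_1\x$ and $(\y^\top\y)\,a_2^\top\x$ alongside the useful term $v_y\,\x^\top A_1^\top\X^\top\y$, and you must show these cannot lower the loss. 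The paper closes this in two steps (Lemmas~\ref{lemma reduce} and~\ref{lemma W=I}): first, it shows the cross-expectation between the useful term and the extra terms vanishes --- using that all odd moments of $\bmu\sim\text{Unif}(\mathbb{S}^{d-1})$ and of the Gaussian noises are zero --- so the loss decomposes as $\Lct(\Wt)$ plus a nonnegative remainder that can be zeroed without constraining $\Wt$; this yields $\min\Lc_{\att\text{-}1}=\min_{\W}\Lct(\W)$ with $\Lct$ a strongly convex quadratic in a single $d\times d$ matrix. Second, it computes $\Lct(\W)$ explicitly via the fourth moments of $\bmu$ (the $\frac{\E[m^2]}{d(d+2)}(\tr{\W}^2+\tr{\W\W^\top}+\tr{\W^2})$ terms), differentiates, and reads off the unique minimizer $\W^\st=c\Iden$ with $c>0$. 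Without some version of this decomposition-plus-explicit-minimization, your argument does not rule out asymmetric or mixed-block minimizers, so \eqref{1 layer pred} remains unproven. (A smaller issue, shared with the paper's own write-up: $|\Ic|$ is Binomial$(n,p)$, so identifying $\eps_\sigma$ with $\sigma/\sqrt{np}$ in \eqref{one layer err} implicitly treats the labeled count as deterministic.)
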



The proof of Theorem~\ref{thm one layer} is deferred to Appendix~\ref{app sec one layer}.
Eq.~\eqref{1 layer pred} shows that one-layer linear attention model indeed implements the optimal supervised predictor, assuming access to $np$ labeled examples. Therefore, the classification error corresponds exactly to that of the SPI estimator. The supervised classification problem has been extensively studied \citep{bartlett2006convexity,belkin2018overfitting,montanari2019generalization,thrampoulidis2020theoretical,chatterji2021finite,cao2021risk,wang2022binary,deng2022model}, with most existing work focusing on a single classification task in asymptotic data or overparameterized regimes.  In contrast, within the ICL framework considered in our setting, the task mean $\bmu$ is randomly sampled, and the classification error is computed by averaging over random draws of $\Z$, $y$, and $\bmu$.  Accordingly, in \eqref{one layer err}, we express the error in a simplified form as an expectation.

The experimental results in Figure~\ref{fig multi layer} support Theorem~\ref{thm one layer}, where dark blue circular markers represent the performance of the single-layer linear attention model, blue curves show the classification accuracy of the SPI estimator, and the red dotted curves depict the accuracy $1 - \P( {y}_{\att\text{-}1}^\st(\Z)\neq y)$ as computed from \eqref{one layer err}. The alignments of these curves empirically validate Theorem~\ref{thm one layer}. Implementation details and further discussion are provided in Section~\ref{sec exp}. Based on these results, we reach the following conclusion:\vspace{3pt}
\[ 
\hspace{0pt}\emph{1-layer linear attention learns optimal supervised estimator but doesn't benefit from unlabeled data.}
\]

As shown in Figs~\ref{fig diff n} and \ref{fig diff n 10000}, when the number of labeled samples ($np = 10$) is fixed, increasing the number of unlabeled examples (even up to $\sim\!10000$) has no effect on performance, as the dark blue markers remain at the same level. 

At first glance, this may seem counterintuitive—while the data is unlabeled, it still contains information about the classification feature. For instance, the mean of the data points carries relevant information, and one might expect the model to extract and leverage this for better predictions. This expectation is particularly reasonable when a large amount of unlabeled data is available, as the sample covariance matrix approximates the population covariance, i.e., $\E[\X^\top\X/n]=\bmu\bmu^\top+\sigma^2\Iden$ where $\X=[\x_1,\x_2,\cdots,\x_n]^\top\in\R^{n\times d}$.
%
%
%
%
%
%
The key insight into why single-layer attention fails to leverage unlabeled data lies in the expectation structure. In our isotropic GMM setting where $\bmu\sim\text{Unif}(\mathbb{S}^{d-1})$, the sample covariance matrix converges to $\E[\X^\top\X/n] =\E[\bmu\bmu^\top]+\sigma^2 \Iden=(1/d+\sigma^2)\Iden$, which contains no task-specific information. The expectation across multiple tasks loses the signal from $\bmu$. This explains why single-layer attention, operating in a meta-learning framework across many tasks rather than optimizing for a single fixed task, cannot extract useful information from unlabeled data.

In the following section, we study multi-layer linear attention and demonstrate that it has the ability to propagate $\X^\top\X$ into deeper layers, thereby enabling the model to utilize the unlabeled data. 

\begin{figure}[!t]
\centering
\begin{subfigure}{0.325\linewidth}    
\centering
\includegraphics[width=\linewidth]{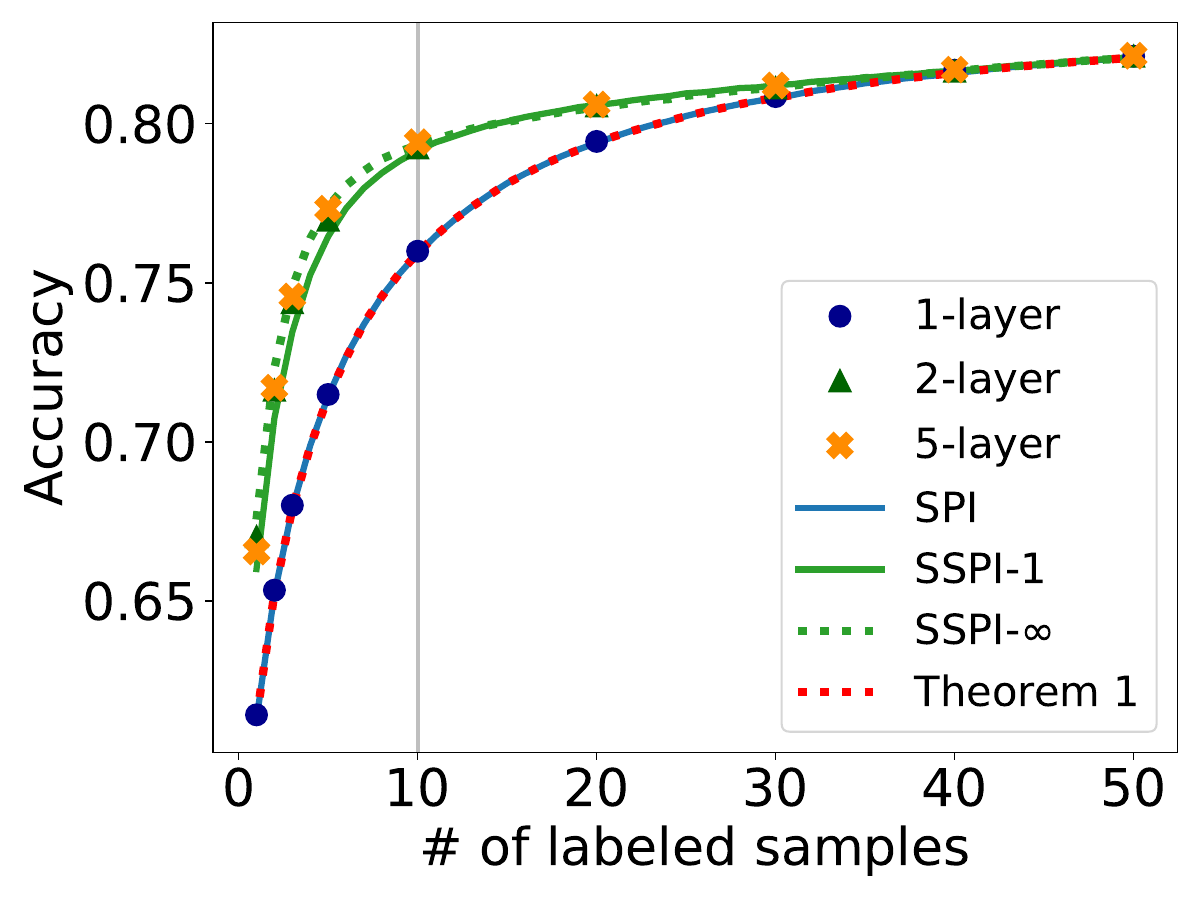}
\caption{$n=50$, $p \in (0, 1]$.}\label{fig diff m}
\end{subfigure}
\begin{subfigure}{0.325\linewidth}    
    \centering
    \includegraphics[width=\linewidth]{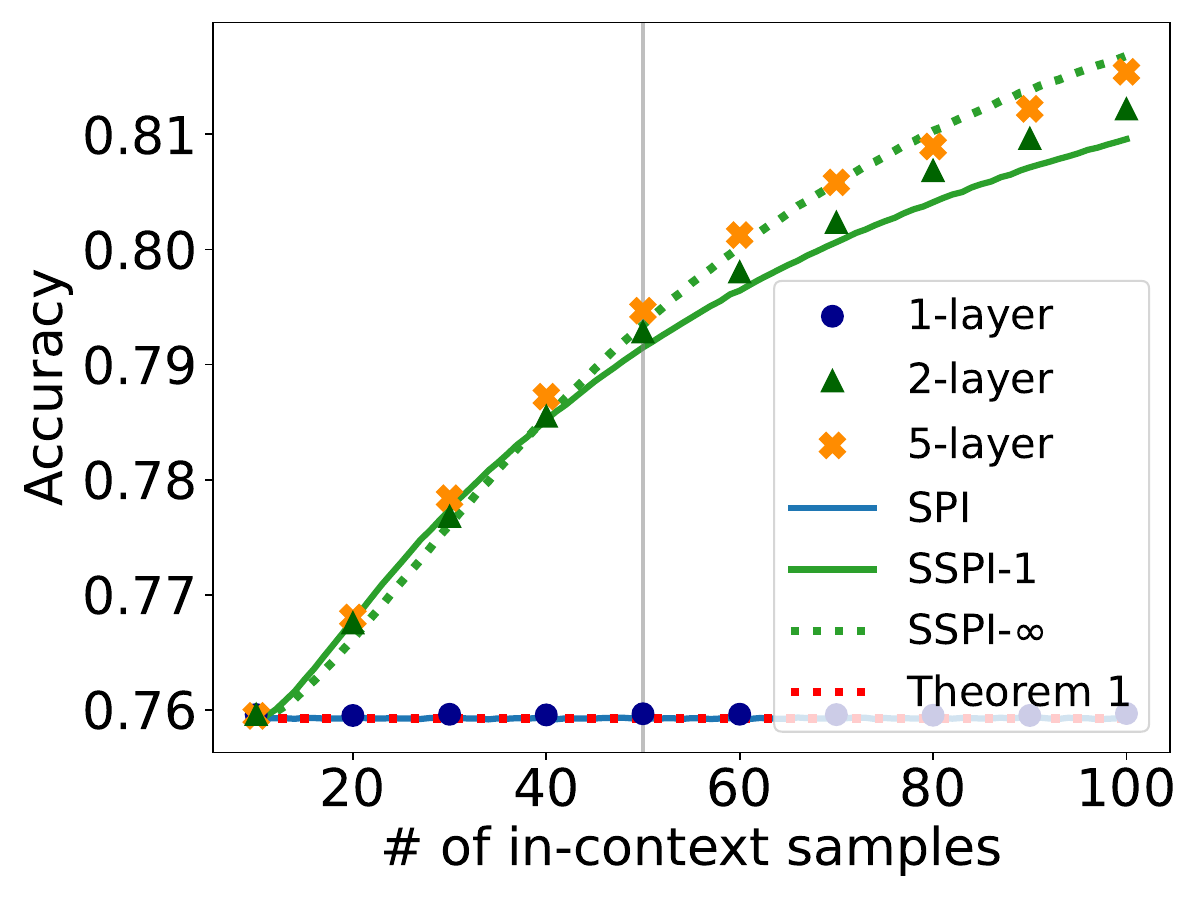}
\caption{$np=10$, $n \in \{10,\cdots,100\}$.}\label{fig diff n}
\end{subfigure}
\begin{subfigure}{0.325\linewidth}    
    \centering
    \includegraphics[width=\linewidth]{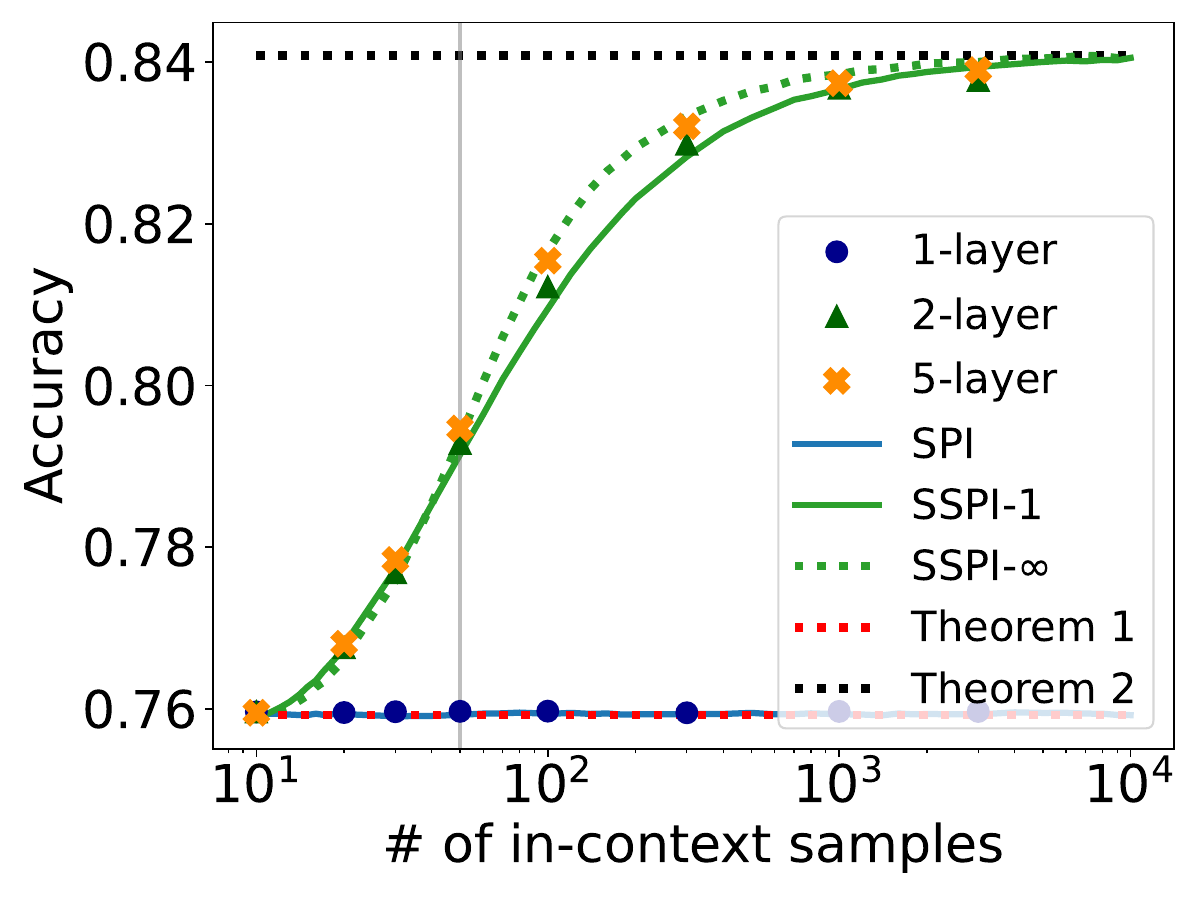}
\caption{$np=10$, $n \in \{10,\cdots,10000\}$.}\label{fig diff n 10000}

\end{subfigure}

\caption{\small{Experimental results support our theoretical findings presented in Sections~\ref{sec one layer} and~\ref{sec multilayer}.  
In all three subfigures, blue, green, and orange markers represent the results of 1-, 2-, and 5-layer linear attention models, respectively. The SPI estimator (cf.~\eqref{pred spi}), SSPI-$1$, and SSPI-$\infty$ (cf.~\eqref{pred sspi}) are shown as blue solid, green solid, and green dotted curves, respectively. 
The red dotted curves in all subfigures correspond to the single-layer/SPI results described in Eq.~\eqref{one layer err} of Theorem~\ref{thm one layer}, while the black dotted line in Fig. \ref{fig diff n 10000} corresponds to Eq.~\eqref{n infty err} of Theorem~\ref{thm optimal A}. Additional details and discussion can be found in Sections~\ref{sec one layer}, \ref{sec multilayer}, and \ref{sec exp}.
}}
\label{fig multi layer}
\end{figure}

\section{{Multi-layer Attention and the Benefits of Depth}}\label{sec multilayer}
In this section, we explore how deeper attention models can effectively utilize the unlabeled data.
Let 
\begin{align}\label{def X y}
    \X=\begin{bmatrix}
        \x_1&\x_2&\cdots&\x_n
    \end{bmatrix}^\top\in\R^{n\times d}\quad\text{and}\quad\y=\begin{bmatrix}
        y_1&y_2&\cdots&y_n
    \end{bmatrix}^\top\in\R^n.
\end{align}
\subsection{$L$-layer Linear Attention can Implement Degree-$O(3^L)$ Polynomials in $X^\top X$}\label{sec multi poly}
We first present the following propositions to show that multi-layer as well as looped linear attention can be expressed as a polynomial function of $\X^\top \X$. This structure allows the models to leverage unlabeled data to improve the estimation of the task mean $\bmu$.
\begin{proposition}\label{prop multilayer}
    Given an $L$-layer linear attention model described in Section~\ref{sec icl} with input prompt $\Z$ defined in \eqref{def Z}, one can construct the key, query, value weight matrices and the linear prediction head such that the model outputs (cf.~\eqref{def f att})
    \begin{align}
        f_{\att\text{-}L}(\Z)=\x^\top\A\X^\top\y.\label{pred L layer}
    \end{align}
    Then, the following $\A$ matrices are achievable via label and feature updates:
    \begin{itemize}
        \item \textbf{Label propagation:} $\A=c\prod_{\ell=1}^{L-1}\left(\Iden+c_\ell\X^\top\X\right)$ for arbitrary constants $\{c,c_1,\cdots,c_{L-1}\}$;
        \item \textbf{Feature propagation:} $\A=c\left(\X^\top\X\right)^{3^{L-1}-1}$ for an arbitrary constant $c$.
    \end{itemize}
\end{proposition}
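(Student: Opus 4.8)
The plan is to reduce everything to the action of a single linear-attention layer at the token level and then compose layers under the residual rule. Writing $\z_i=[\x_i;y_i]$ for $i\le n$ and $\z_{n+1}=[\x;0]$, and using $\M=\diag{\Iden_n,0}$, a direct expansion of \eqref{def att} shows the output at token $i$ is $\att(\Z;\Wc)_i=\sum_{j=1}^n(\z_i^\top\W_q\W_k^\top\z_j)(\z_j^\top\W_v)$: a score $\z_i^\top\W_q\W_k^\top\z_j$ times a value $\W_v^\top\z_j$, where the mask restricts the value sum to the $n$ demonstration tokens. I would fix once and for all a block convention in which $\W_q\W_k^\top$ and $\W_v$ are chosen with $\{0,1\}$ blocks that (a) read either the feature part or the label part of a token into the score, and (b) write the value into either the feature coordinates or the label coordinate. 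Since the residual rule $\Z_\ell=\Z_{\ell-1}+\att(\Z_{\ell-1};\Wc_{\ell-1})$ preserves any coordinate a layer does not write to, this lets me freeze features while editing labels, and vice versa.

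For the label-propagation claim I would use the first $L-1$ layers to edit only the label coordinate. In layer $\ell$ I set the score to the feature inner product $\x_i^\top\x_j$ (via $\W_q\W_k^\top=\diag{\Iden_d,0}$) and the value to $c_\ell$ times the current label (via $\W_v=c_\ell\,\eb_{d+1}\eb_{d+1}^\top$); as the value writes only to the label coordinate, the features stay equal to the original $\x_i$ throughout. Collecting label coordinates into a vector, the residual update is exactly $\tilde\y^{(\ell+1)}=(\Iden_n+c_\ell\X\X^\top)\tilde\y^{(\ell)}$, so after $L-1$ layers $\tilde\y=\prod_{\ell=1}^{L-1}(\Iden_n+c_\ell\X\X^\top)\y$. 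The final layer reads the query feature against the demonstration features and extracts the updated label as its value, giving $\hb^\top\att(\Z_L;\Wc_L)_{n+1}=c\,\x^\top\X^\top\tilde\y$. The statement then follows from the push-through identity $\X^\top(\Iden_n+c\X\X^\top)=(\Iden_d+c\X^\top\X)\X^\top$, applied factor by factor so that $\X^\top$ telescopes to the left and each $\X\X^\top$ becomes $\X^\top\X$, yielding $\A=c\prod_{\ell=1}^{L-1}(\Iden_d+c_\ell\X^\top\X)$.

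For the feature-propagation claim the first $L-1$ layers instead edit the feature coordinates, with both score and value read from the current features: score $\tilde\x_i^\top\tilde\x_j$ and value $\tilde\x_j$. If the features currently equal $(\X^\top\X)^k\x_i$, the score contributes two extra powers (it equals $\x_i^\top(\X^\top\X)^{2k}\x_j$), the Gram sum $\sum_j\x_j\x_j^\top=\X^\top\X$ contributes one more, and the value contributes $k$, so one layer advances the power by $k\mapsto 3k+1$; solving from $k=0$ gives degree $(3^{L-1}-1)/2$ after $L-1$ layers. In the final layer both the query and the demonstration features are transformed while the value extracts the label, which squares the query's power and produces $\hb^\top\att(\Z_L;\Wc_L)_{n+1}=c\,\x^\top(\X^\top\X)^{3^{L-1}-1}\X^\top\y$, i.e. $\A=c(\X^\top\X)^{3^{L-1}-1}$. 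The base case $L=1$ is immediate: there are no feature-update layers and the final layer alone gives $\A=c\Iden$.

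The main obstacle is the residual connection in the feature-propagation construction. In label propagation the identity in each factor $\Iden_n+c_\ell\X\X^\top$ is exactly the residual, so that construction is automatically clean; but a pure power $(\X^\top\X)^{3^{L-1}-1}$ carries no identity term, while the residual adds the previous lower-degree feature back at every layer. Tracking this faithfully, the features after $L-1$ layers form a degree-$(3^{L-1}-1)/2$ polynomial $Q_L(\X^\top\X)\x_i$ rather than a monomial, and the final layer returns $\x^\top Q_L(\X^\top\X)^2\X^\top\y$, whose leading term is the claimed power but which also contains lower powers. I would isolate the monomial by one of two routes: either augment each layer so it cancels the residual's self-term (the only problematic piece, since it is the unique contribution not produced by an attention sum over $j$, and hence cannot come from a bilinear score), or present the pure power as the leading, maximal-degree term with nonzero coefficient, which already certifies that depth $L$ expresses degree $3^{L-1}-1$. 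I expect the $k\mapsto 3k+1$ recurrence together with the final squaring to remain the crux, with reconciling it against the exact monomial form the delicate step. Throughout I would also verify that the feature- and label-editing blocks write to disjoint coordinates and that the residual-based freezing is applied consistently.
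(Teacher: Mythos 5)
Your construction is essentially the paper's: identical weight choices (score $=\x_i^\top\x_j$ via $\W_q\W_k^\top=\diag{\Iden_d,0}$, value writing $a_\ell$ times the feature block and/or $b_\ell$ times the label coordinate), the same residual bookkeeping, the same push-through identity for label propagation, and the same $k\mapsto 3k+1$ recurrence with final squaring giving degree $3^{L-1}-1$ for feature propagation. The label-propagation half is complete and matches the paper exactly.

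The one step you leave open --- the residual's self-term polluting the pure monomial in feature propagation --- is resolved in the paper by neither of your two routes but by a third, closely related one: take the feature-value gain $a_\ell\to\infty$ (and the label-value gain $b_\ell\to 0^+$ so labels stay frozen), so that in $\X_{\ell+1}=\X_\ell+a_\ell\X_\ell\X_\ell^\top\X_\ell$ the residual term is asymptotically negligible relative to the cubic term and $\X_L$ collapses to $\bigl(\prod_{\ell=1}^{L-1}a_\ell^{3^{L-1-\ell}}\bigr)\X(\X^\top\X)^{(3^{L-1}-1)/2}$, with the overall constant absorbed into $c$ via the final head. This is your ``leading term'' observation upgraded to achievability in the limit: after normalizing by the top coefficient, all lower-order coefficients vanish as $a_\ell\to\infty$, so the pure power lies in the closure of the expressible set (the paper treats this limit as equality and does not attempt exact cancellation, which, as you correctly suspect, is not available within this parameterization). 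Your parenthetical doubt about route (a) is therefore well placed, and your route (b) is the substance of what the paper actually proves; the only thing missing from your write-up is the explicit rescaling argument that turns ``leading term with nonzero coefficient'' into the stated monomial form. Everything else, including the $L=1$ base case and the disjointness of the feature/label writes under the residual rule, checks out.
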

\begin{proposition}\label{prop looped}
    Consider the same setting as in Proposition~\ref{prop multilayer}. There exists a single-layer linear attention model whose parameters can be constructed
    {such that, when looped $L$ times, its output reproduces that of \eqref{pred L layer},}
    with \( c_\ell \equiv c' \) for some arbitrary constant \( c' \).
\end{proposition}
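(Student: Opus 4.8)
The plan is to piggyback on the label-propagation branch of Proposition~\ref{prop multilayer} and observe that weight-tying across the loop forces the per-layer constants $c_\ell$ to collapse to a single shared value $c'$. First I would recall the explicit weight choice behind that branch: set $\W_q\W_k^\top=\begin{bmatrix}c'\Iden_d&0\\0&0\end{bmatrix}$ and $\W_v=\begin{bmatrix}0&0\\0&1\end{bmatrix}$. The block structure makes the attention scores equal $\x_j^\top(c'\Iden_d)\x_i$, i.e.\ they depend only on the features and never on the labels, while $\W_v$ retains only the label coordinate as the value. A direct computation---identical to the one proving Proposition~\ref{prop multilayer}, now with a single constant---shows that together with the mask $\M$ the attention output at every token $j$ is $[0,\dots,0,\,c'\x_j^\top\X^\top\y]$. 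In particular its first $d$ coordinates vanish, so the residual update $\Z_\ell=\Z_{\ell-1}+\att(\Z_{\ell-1};\Wc)$ leaves all feature rows untouched and acts only on the label column.

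Next I would track the label column through the loop. Writing $\y^{(\ell)}\in\R^n$ for the demonstration labels entering layer $\ell$ with $\y^{(1)}=\y$, the update reads $\y^{(\ell+1)}=(\Iden_n+c'\X\X^\top)\y^{(\ell)}$; crucially the mask $\M$ zeroes the query row, so the query label never re-enters the sum $\X^\top\y^{(\ell)}$ and may be ignored at intermediate steps. Since every looped layer reuses the identical weights, this same affine map is applied $L-1$ times, yielding $\y^{(L)}=(\Iden_n+c'\X\X^\top)^{L-1}\y$. Reading out the query token with one further application of the same block and the head $\hb=[0,\dots,0,h]^\top$ then gives $f=hc'\,\x^\top\X^\top\y^{(L)}$.

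The final step is the algebraic identification. Applying the push-through identity $\X^\top(\Iden_n+c'\X\X^\top)=(\Iden_d+c'\X^\top\X)\X^\top$ repeatedly to move $\X^\top$ across all $L-1$ factors, I get $f=hc'\,\x^\top(\Iden_d+c'\X^\top\X)^{L-1}\X^\top\y=\x^\top\A\X^\top\y$ with $\A=hc'(\Iden_d+c'\X^\top\X)^{L-1}$. This is precisely the label-propagation form $\A=c\prod_{\ell=1}^{L-1}(\Iden+c_\ell\X^\top\X)$ specialized to $c_\ell\equiv c'$, the overall scale $c=hc'$ being absorbed freely into the prediction head, which establishes the proposition.

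I expect the only delicate point to be verifying stationarity of the construction, i.e.\ that one fixed weight set can legitimately be reused at every iteration. This amounts to checking three invariances at once: the feature rows are preserved across loops (so $\X$ and $\x$ are the same for every layer), the query label's intermediate values never feed back (guaranteed by $\M$), and the labels never contaminate the scores (guaranteed by the zero block in $\W_q\W_k^\top$). Once these hold, the collapse $c_\ell\equiv c'$ is immediate and everything else is the routine push-through calculation inherited from Proposition~\ref{prop multilayer}.
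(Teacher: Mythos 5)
Your proposal is correct and is essentially the paper's own argument: the paper's proof is a one-liner that reuses the construction of Proposition~\ref{prop multilayer} with the per-layer constants tied ($a_\ell\equiv a$, $b_\ell\equiv b$), and your explicit tracking of the label column, the invariance of the feature rows, the role of the mask $\M$, and the push-through identity $\X^\top(\Iden_n+c'\X\X^\top)=(\Iden_d+c'\X^\top\X)\X^\top$ is exactly the weight-tied specialization of that label-propagation computation (placing the scalar in $\W_q\W_k^\top$ rather than in $\W_v$ is an immaterial cosmetic difference). The only minor remark is that you treat only the label-propagation branch, whereas the paper's tied construction also yields the feature-propagation form via $a_\ell\equiv a$; since the proposition statement explicitly asks for $c_\ell\equiv c'$, your branch is the one being asserted.
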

The proofs of Proposition~\ref{prop multilayer} and ~\ref{prop looped} are deferred to Appendix \ref{app sec prop multi} and \ref{app sec prop looped}. In the following, we provide further clarification on the label and feature propagation.
\begin{enumerate}[leftmargin=*]
\item The final prediction of the label propagation process can be rewritten as
\begin{align*}
f_{\att\mbar L}(\Z) = c\x^\top \X^\top {\y}_L\quad\text{where}\quad\y_{\ell+1} = (\Iden + c_\ell \X \X^\top) \y_\ell,\quad \text{for}\quad\ell \in [L-1]
\end{align*}
with $\y_1 = \y$. Here, $\y_\ell$ can be interpreted as the soft pseudo-labels input to the $\ell$th layer, and each $c_\ell$ is parameterized by the attention mechanism in the corresponding layer. Although not exactly equivalent, the $L$-layer linear attention process shares similarities with the Expectation-Maximization (EM) algorithm for semi-supervised learning, with $L$ iterations of pseudo-labeling and a different label update strategy.

\item In contrast, the feature propagation process yields the final prediction
\begin{align*}
f_{\att\mbar L}(\Z)=c\x_L^\top\X_L^\top\y~~\text{where}~~\X_{\ell+1}=(\X_{\ell}\X_\ell^\top)\X_\ell~~\text{and}~~ \x_{\ell+1}=(\X_\ell^\top\X_\ell)\x_\ell,~~ \text{for}~~\ell \in [L-1]
\end{align*}
with $\X_1=\X$ and $\x_1=\x$. Here, $(\X_\ell, \x_\ell)$ can be viewed as the input features at the $\ell$th layer, encoding exponentially higher-order powers of $\X^\top \X$. This result highlights that a linear attention model requires only $O(\log K)$ layers to represent polynomial functions of degree $K$.
\end{enumerate}
Our construction for \emph{label propagation} is inherently related to the \emph{gradient descent} emulation capability of linear attention \cite{ahn2024transformers}. However, the \emph{feature propagation} construction is fundamentally different and underscores the transformer's capability to implement rapid power iteration over the empirical covariance $\X^\top\X$. 
In the above constructions, each attention block with residual connections updates features or labels using one parameter, namely mappings of the form $\X\rightarrow \X+\alpha \X\X^\top \X$ or $\y\rightarrow \y+\beta \X\X^\top\y$. The lemma below shows that, even if the multilayer model can express polynomials of $\X^\top\X$ with exponential degrees in depth, the expressible manifold of polynomials has dimensionality linear in depth.
\begin{lemma}[{Label $+$ Feature Propagation}]\label{lemma label+feature} For an $L$-layer linear attention model, the resulting eventual prediction corresponds to the matrix $\A$ in Proposition~\ref{prop multilayer} of the form
\begin{align}\label{A highest degree}
    \A=\sum_{\ell=0}^{(3^L-3)/2}a_\ell(\X^\top\X)^\ell.
\end{align}
The coefficients $\ab := [a_0~~a_1~~\cdots~~a_{(3^L-3)/2}]^\top$ lie on a manifold of dimension at most $2L$ as $\ab$ can be expressed as $\ab = g(\cb)$ for some smooth function $g : \mathbb{R}^{2L} \to \mathbb{R}^{(3^L - 3)/2}$ with $\cb$ representing the parameters of individual layers. 
\end{lemma}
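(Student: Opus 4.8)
The plan is to fuse the two schemes of Proposition~\ref{prop multilayer} into a single recursion in which every layer simultaneously performs one feature update (parameter $\alpha_\ell$) and one label update (parameter $\beta_\ell$), and then to read off both the polynomial degree and the number of free parameters. Concretely, I would track three evolving objects: the bulk features $\X_\ell$, the query feature $\x_\ell$, and the soft labels $\y_\ell$, governed by the residual maps $\X_{\ell+1}=(\Iden+\alpha_\ell\X_\ell\X_\ell^\top)\X_\ell$, $\x_{\ell+1}=(\Iden+\alpha_\ell\X_\ell^\top\X_\ell)\x_\ell$, and $\y_{\ell+1}=(\Iden+\beta_\ell\X_\ell\X_\ell^\top)\y_\ell$, with the eventual prediction given by the bilinear readout $f=\x_L^\top\X_L^\top\y_L$. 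The first step is to verify, using the identity $\X_\ell^\top(\X_\ell\X_\ell^\top)^k\X_\ell=(\X_\ell^\top\X_\ell)^{k+1}$, that each object equals a polynomial in $\X^\top\X$ applied to a fixed factor ($\x$, $\X^\top$, or $\X^\top\y$ respectively), and that these polynomial coefficients are themselves polynomials in the collected layer parameters $\cb:=(\alpha_\ell,\beta_\ell)_\ell$. Substituting into the readout then yields the claimed form $f=\x^\top\A\X^\top\y$ with $\A=\sum_\ell a_\ell(\X^\top\X)^\ell$ and $\ab=g(\cb)$ for a polynomial, hence smooth, map $g$ whose domain has dimension (at most) $2L$.

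For the degree bound I would argue by tracking the \emph{total} polynomial degree in the entries of $\X$ (treating $\x$ and $\y$ as degree-$0$ external data). Writing $D_\ell$, $D_\ell^x$, $D_\ell^y$ for the degrees of $\X_\ell$, $\x_\ell$, $\y_\ell$, the feature update triples $D_\ell$, giving $D_\ell=3^{\ell-1}$, while the updates to $\x$ and $\y$ obey $D_{\ell+1}^x=2\cdot 3^{\ell-1}+D_\ell^x$ and the analogous relation for $y$, which solve to $D_\ell^x=D_\ell^y=3^{\ell-1}-1$. Summing the three contributions in the readout gives total degree $(3^{L-1}-1)+3^{L-1}+(3^{L-1}-1)=3^L-2$; since $\x^\top(\X^\top\X)^\ell\X^\top\y$ has degree $2\ell+1$ in $\X$, the top monomial corresponds to $\ell=(3^L-3)/2$, which is exactly the upper limit in \eqref{A highest degree}. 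I would additionally check that every term produced by the residual cross-products has $\X$-degree bounded by $3^L-2$, so that the expansion of $\A$ truly terminates at power $(3^L-3)/2$.

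The dimension claim then follows by a soft argument: since $\ab=g(\cb)$ with $g$ a polynomial (hence $C^\infty$) map out of $\R^{2L}$, its Jacobian has rank at most $2L$ everywhere, so the image lies in a set of dimension at most $2L$ — either by the constant-rank/Sard reasoning, or simply because the image of a polynomial map is a semialgebraic set whose dimension cannot exceed that of its domain. This is precisely the stated ``manifold of dimension at most $2L$,'' even though $\A$ may carry exponentially many nonzero coefficients $a_0,\dots,a_{(3^L-3)/2}$.

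I expect the main obstacle to be the second step, the bookkeeping of the degree recursion: one must confirm that the coupling of simultaneous feature and label updates produces exactly total degree $3^L-2$, that all lower-order cross terms spawned by the residual connections remain within $\X^\top\X$-degree $(3^L-3)/2$, and that there is no accidental cancellation of the leading coefficient or emergence of a higher-degree term. The remaining concern — that this combined recursion is faithfully realizable by genuine key/query/value weights and a linear head — reduces to the constructions already established in Propositions~\ref{prop multilayer} and~\ref{prop looped}, which I would invoke rather than re-derive.
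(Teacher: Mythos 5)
Your proposal is correct and follows essentially the same route as the paper's proof: both run the combined feature/label recursion from the construction of Proposition~\ref{prop multilayer}, observe that the coefficients of the resulting polynomial in $\X^\top\X$ depend smoothly (indeed polynomially) on the $O(L)$ scalar layer parameters, and obtain the top degree $(3^L-3)/2$ by unrolling the recursion one layer at a time. The only cosmetic difference is bookkeeping — you track total degree in the entries of $\X$ (getting $3^L-2$, i.e.\ $2\ell+1$ with $\ell=(3^L-3)/2$), while the paper applies a ``highest-power'' projection $\Pc$ directly to the $\X^\top\X$ expansion — and both yield the same count.
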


\subsection{Which Semi-supervised Algorithm Does Multi-layer Attention Approximate?}

Recall the SPI estimator $\hat{\bmu}_s$ from \eqref{pred spi}, and that $\y$ denotes the visible labels defined in Section~\ref{sec data} and \eqref{def X y}. We have $\hat{\bmu}_s = \frac{1}{|\Ic|} \X^\top \y$. Motivated by Proposition~\ref{prop multilayer} that multi-layer linear attention can implement higher-degree polynomials of $\X^\top \X$, we introduce the following SSPI estimator, which makes predictions based on the supervised estimate $\hat{\bmu}_s$ combined with higher-order debiased term of the form $(\X^\top \X/n-\sigma^2\Iden)^k$.
\paragraph*{Semisupervised Plug-in (SSPI) Estimator}
Observe that the feature covariance satisfies $\E[\X^\top\X]/n=\bmu\bmu^\top+\sigma^2\Iden$, and the top eigenvector of the centered covariance matrix $(\X^\top \X/n - \sigma^2 \Iden)$ asymptotically aligns with either $\bmu$ or $-\bmu$. Therefore, with a substantial amount of unlabeled data, 
we propose the semisupervised plug-in (SSPI) estimator as follows:
\begin{align}\label{pred sspi}\tag{SSPI-$k$}
     \hat\bmu_{ss\mbar k}=\alpha\hat\bmu_s+(1-\alpha)
     (\X^\top\X/n-\sigma^2\Iden)^{k}\hat{\bmu}_s 
\end{align}
where $\hat\bmu_s$ is the SPI estimator (cf.~\eqref{pred spi}), and $\alpha\in[0,1]$ controls the trade-off between the fully-supervised and semi-supervised estimators. The optimal choice of $\alpha$ depends on the problem parameters $n,d$ and $p$.
Note that as $k \to \infty$, the term $(\X^\top \X / n - \sigma^2 \Iden)^k$ converges (up to scaling) to a rank-one projection onto the top eigenvector of the debiased covariance matrix, effectively serving as an estimator for $\bmu$ (up to sign).

In Figure~\ref{fig multi layer}, we present the prediction accuracies of 2-layer and 5-layer linear attention models, shown by green and orange markers, respectively. We also evaluate the SSPI algorithm with varying $k$ values, where the green solid curve corresponds to SSPI-$1$, and the green dotted represents SSPI-$\infty$, both using their respective optimal choices of $\alpha$. Details on selecting the optimal $\alpha$ values are provided in Section~\ref{sec exp} and illustrated in Figure~\ref{fig alpha}.  The results reveal a close alignment between multi-layer linear attention and SSPI estimators. Notably, the 2-layer model outperforms SSPI-$1$, due to its ability to implement higher-degree polynomials of $\X^\top \X$ (cf. Proposition~\ref{prop multilayer} and Equation~\eqref{A highest degree}).
When the sample size is sufficiently large (e.g., $n > 50$ in Figure~\ref{fig diff n}), the top eigenvector provides a more accurate estimate of the task mean, enabling SSPI-$\infty$ to achieve higher accuracy. Furthermore, since the 5-layer model is capable of representing higher-order functions than the 2-layer model, it can better estimate the top eigenvector, resulting in performance that closely matches that of SSPI-$\infty$.

In the following, we analyze the optimal classifier of the form $\sgn{\x^\top \A \hat{\bmu}_s}$ for a GMM, and provide insights into its behavior in the asymptotic regime as $n \to \infty$.
\begin{theorem}\label{thm optimal A}
    Consider a binary GMM defined in Section~\ref{sec data} and suppose that $(\x_i,y_i)_{i=1}^{n+1
    }$ is generated using a fixed $\bmu$ following \eqref{def data}. Given matrix $\A\in\R^{d\times d}$, define prediction
    \[
    \hat y_{\A}=\sgn{\x^\top\A\hat\bmu_s}.
    \]
    where $\hat\bmu_{s}$ is the SPI estimator defined in \eqref{pred spi}. Let $\Ac^\st:=\min_{\A\in\R^{d\times d}}\P(\hat y_\A\neq y)$ be its optimal solution set. Then, $\bmu\bmu^\top\in\Ac^\st$. Additionally, it obeys
    \begin{align}\label{n infty err}
    \P(\hat y_{\bmu\bmu^\top}\neq y)=\underset{\text{Bayes~error}}{\underbrace{Q(1/\sigma)}}+Q(\sqrt{np}/{\sigma})-2Q({1}/{\sigma})Q({\sqrt{np}}/{\sigma}).
    \end{align}
\end{theorem}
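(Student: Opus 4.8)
The plan is to condition on the demonstrations so that only the query remains random, reducing the question to how well the fixed vector $\A\hat\bmu_s$ aligns with $\bmu$; then to settle optimality via a reflection/symmetrization trick exploiting linearity of $\A$ together with the asymmetry of the law of $\hat\bmu_s$; and finally to read off the error value by a direct computation for $\A=\bmu\bmu^\top$. First I would put the SPI estimator in signal-plus-noise form: for a labeled index $i$, $y_i\x_i=(y_i^c)^2\bmu+y_i^c\bxi_i=\bmu+y_i^c\bxi_i$, so conditioning on the labeled set with $m:=|\Ic|$,
\[
\hat\bmu_s=\bmu+\tfrac1m\g,\qquad \g:=\textstyle\sum_{i\in\Ic}y_i^c\bxi_i\sim\Nc(0,m\sigma^2\Iden),
\]
which is independent of the query $\x=y\bmu+\bxi$. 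Since $\hat y_\A=\sgn{\x^\top\A\hat\bmu_s}$ is invariant to positive rescaling of $\hat\bmu_s$, I condition on $\hat\bmu_s=\ub$ and set $\w:=\A\ub$. Writing $\x^\top\w=y\,\bmu^\top\w+\bxi^\top\w$ with $\bxi^\top\w\sim\Nc(0,\sigma^2\|\w\|^2)$, a one-line Gaussian computation gives the conditional error
\[
\mathrm{err}_\A(\ub):=\P\big(\hat y_\A\neq y\mid \hat\bmu_s=\ub\big)=Q\!\left(\frac{\bmu^\top\A\ub}{\sigma\,\|\A\ub\|}\right),
\]
which depends only on the direction of $\A\ub$; by Cauchy--Schwarz (and $\|\bmu\|=1$) the argument lies in $[-1/\sigma,1/\sigma]$, so $\mathrm{err}_\A(\ub)\in[Q(1/\sigma),\,1-Q(1/\sigma)]$ for \emph{every} $\A$, the extremes being attained exactly when $\A\ub\parallel\pm\bmu$.

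The crux is the optimality claim, and the idea is to pair each $\ub$ with $-\ub$. Linearity gives $\A(-\ub)=-\A\ub$, hence $\mathrm{err}_\A(-\ub)=Q\!\big(-\bmu^\top\w/(\sigma\|\w\|)\big)=1-\mathrm{err}_\A(\ub)$. Writing $f$ for the density of $\hat\bmu_s$ and $\Delta(\ub):=f(\ub)-f(-\ub)$, I substitute $\ub\mapsto-\ub$ in $\P(\hat y_\A\neq y)=\int\mathrm{err}_\A(\ub)f(\ub)\,d\ub$ and add the two resulting expressions to obtain
\[
\P(\hat y_\A\neq y)=\tfrac12+\tfrac12\int\mathrm{err}_\A(\ub)\,\Delta(\ub)\,d\ub.
\]
A short density computation gives $f_m(\ub)/f_m(-\ub)=\exp(2m\bmu^\top\ub/\sigma^2)$, so $\sgn{\Delta(\ub)}=\sgn{\bmu^\top\ub}$; crucially this sign is independent of $m$, hence it persists after mixing over the random $|\Ic|\sim\mathrm{Bin}(n,p)$. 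Now $\A=\bmu\bmu^\top$ maps $\ub\mapsto(\bmu^\top\ub)\bmu$, giving $\mathrm{err}_{\bmu\bmu^\top}(\ub)=Q(1/\sigma)$ when $\bmu^\top\ub>0$ (where $\Delta>0$) and $=1-Q(1/\sigma)$ when $\bmu^\top\ub<0$ (where $\Delta<0$). Since $\mathrm{err}_\A(\ub)\in[Q(1/\sigma),1-Q(1/\sigma)]$ for any $\A$, the product $\mathrm{err}_\A(\ub)\Delta(\ub)$ is minimized pointwise by $\bmu\bmu^\top$; integrating yields $\P(\hat y_\A\neq y)\ge\P(\hat y_{\bmu\bmu^\top}\neq y)$ for all $\A$, i.e.\ $\bmu\bmu^\top\in\Ac^\st$. (Degenerate $\A$ with $\A\ub=0$ are harmless, as there $\mathrm{err}=1/2$ still respects the bound.)

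For the error value I would compute $\mathrm{err}_{\bmu\bmu^\top}$ directly. Here $\x^\top\A\hat\bmu_s=(\x^\top\bmu)(\bmu^\top\hat\bmu_s)$, so the prediction factorizes as $\sgn{\x^\top\bmu}\cdot\sgn{\bmu^\top\hat\bmu_s}$ and is wrong exactly on the symmetric difference $E_1\triangle E_2$ of the two \emph{independent} events $E_1:=\{\sgn{\x^\top\bmu}\neq y\}$ and $E_2:=\{\bmu^\top\hat\bmu_s<0\}$ (query noise vs.\ demonstration noise). From $\x^\top\bmu=y+\bxi^\top\bmu$ one gets $\P(E_1)=Q(1/\sigma)$, and from $\bmu^\top\hat\bmu_s\sim\Nc(1,\sigma^2/m)$ one gets $\P(E_2)=Q(\sqrt m/\sigma)$; independence then gives $\P(\hat y_{\bmu\bmu^\top}\neq y)=Q(1/\sigma)+Q(\sqrt m/\sigma)-2Q(1/\sigma)Q(\sqrt m/\sigma)$. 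Setting the (effective) labeled count $m=np$ --- exact in the $n\to\infty$ regime since $|\Ic|/n\to p$ --- recovers \eqref{n infty err}, the irreducible term $Q(1/\sigma)=\P(E_1)$ being the Bayes error from the query noise.

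The main obstacle is the optimality half. The reflection identity is what makes it tractable, and its nontrivial content is noticing that linearity forces $\mathrm{err}(-\ub)=1-\mathrm{err}(\ub)$ while the Gaussian asymmetry aligns $\sgn{\Delta(\ub)}$ with $\bmu^\top\ub$, so that $\bmu\bmu^\top$ --- which sends $\ub$ to the favorable $\pm\bmu$ extreme precisely on the more-likely side --- is simultaneously pointwise optimal for every $\ub$.
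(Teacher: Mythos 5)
Your proof is correct, and the optimality half takes a genuinely different route from the paper's. Both arguments start from the same conditional error $Q\big(\bmu^\top\A\ub/(\sigma\tn{\A\ub})\big)$ and both ultimately exploit the same density asymmetry $f(\ub)\ge f(-\ub)$ on $\{\bmu^\top\ub>0\}$, but the paper proceeds by decomposing $\A=\sum_i\lambda_i\ub_i\vb_i^\top$ with $\ub_1=\bmu$, first arguing that the optimal $\vb_1$ must maximize $\P(\vb_1^\top\hat\bmu_s>0)$ (hence $\vb_1=\bmu$) and then killing the residual term $\Delta(\hat\bmu_s)=\sum_{i\ge2}(\lambda_1^{-1}\lambda_i\vb_i^\top\hat\bmu_s)^2$ by setting $\lambda_i=0$ for $i\ge 2$. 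You replace this two-stage optimization with a uniform Cauchy--Schwarz bound $\mathrm{err}_\A(\ub)\in[Q(1/\sigma),\,1-Q(1/\sigma)]$ combined with the reflection identity $\mathrm{err}_\A(-\ub)=1-\mathrm{err}_\A(\ub)$, which reduces the whole problem to pointwise minimization of $\mathrm{err}_\A(\ub)\Delta(\ub)$ --- a lower bound that $\bmu\bmu^\top$ visibly saturates for every $\ub$. This is cleaner and arguably tighter at the one soft spot of the paper's argument: the claim that the optimal $\vb_1$ ``is contained within'' the maximizers of $\P(\vb_1^\top\hat\bmu_s>0)$ is asserted rather than derived there, whereas your pointwise bound needs no such step. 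The final error computation is the same calculation in two guises: you phrase it as independence of the query-noise event $E_1$ and the demonstration-noise event $E_2$ and evaluate $\P(E_1\triangle E_2)$, while the paper integrates $Q(\pm 1/\sigma)$ over the half-spaces $\{\bmu^\top\hat\bmu_s\gtrless 0\}$. One shared imprecision: both you and the paper replace the random labeled count $|\Ic|\sim\mathrm{Bin}(n,p)$ by $np$ in \eqref{n infty err}; you at least flag this as an asymptotic identification, which the paper does not.
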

%
%
Note that, $\P(\hat y_{\bmu\bmu^\top}\neq y)$ depends on $np$ and $\sigma$ only, regardless of $\bmu$ and $d$. 
%
%
%
\begin{theorem}\label{thm n infty}
    Let the prompt $\Z$ be generated as described in Section~\ref{sec icl}, and consider an $L$-layer linear attention model with $L \geq 2$ and $n = \infty$.  Additionally, let $\hat\bmu_{s}$ be the SPI estimator defined in \eqref{pred spi}. There exist model constructions such that for any $\Z$ following \eqref{def Z}, its prediction satisfies
    \[
    y_{\att\mbar L}(\Z)=\sgn{\x^\top\bmu\bmu^
    \top\hat\bmu_s}.
    \]
\end{theorem}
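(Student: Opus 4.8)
The plan is to realize the optimal matrix $\bmu\bmu^\top$ identified in Theorem~\ref{thm optimal A} as the effective predictor matrix of a depth-$L$ construction, exploiting the label-propagation family from Proposition~\ref{prop multilayer}. By that proposition there is a construction with $f_{\att\mbar L}(\Z)=\x^\top\A\X^\top\y$ and $\A=c\prod_{\ell=1}^{L-1}(\Iden+c_\ell\X^\top\X)$ for freely chosen constants. Since $y_i=0$ outside the labeled index set $\Ic$, we have $\X^\top\y=|\Ic|\,\hat\bmu_s$, so $f_{\att\mbar L}(\Z)=|\Ic|\,\x^\top\A\hat\bmu_s$ and hence $y_{\att\mbar L}(\Z)=\sgn{\x^\top\A\hat\bmu_s}$. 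It therefore suffices to choose the constants so that $\A$ becomes a \emph{positive} multiple of $\bmu\bmu^\top$ in the limit $n\to\infty$, since then $\sgn{\x^\top\A\hat\bmu_s}=\sgn{\x^\top\bmu\bmu^\top\hat\bmu_s}$.

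First I would collapse the product to a single active factor, which is available precisely because $L\ge 2$: set $c_\ell=0$ for $\ell\ge 2$ so those slots become $\Iden$, leaving $\A=c(\Iden+c_1\X^\top\X)$. Since the constants may depend on the fixed problem parameters $n$ and $\sigma$, I would take $c=-\sigma^2$ and $c_1=-1/(n\sigma^2)$, which yields exactly
\[
\A=\tfrac{1}{n}\X^\top\X-\sigma^2\Iden,
\]
the debiased sample covariance underlying the SSPI-$1$ estimator of \eqref{pred sspi}. I would then pass to $n=\infty$: conditioning on the fixed task mean $\bmu$ (as in Theorem~\ref{thm optimal A}), the law of large numbers gives $\tfrac1n\X^\top\X\to\bmu\bmu^\top+\sigma^2\Iden$, so subtracting $\sigma^2\Iden$ sends $\A\to\bmu\bmu^\top$. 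The limit is rank one with coefficient exactly $1>0$ precisely because the noise is isotropic of known variance $\sigma^2$ and we subtracted exactly $\sigma^2\Iden$; combining this with $y_{\att\mbar L}(\Z)=\sgn{\x^\top\A\hat\bmu_s}$ delivers $y_{\att\mbar L}(\Z)=\sgn{\x^\top\bmu\bmu^\top\hat\bmu_s}$.

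The step I expect to require the most care is passing the sign through the limit: to conclude that the finite-$n$ prediction $\sgn{\x^\top(\tfrac1n\X^\top\X-\sigma^2\Iden)\hat\bmu_s}$ agrees with $\sgn{\x^\top\bmu\bmu^\top\hat\bmu_s}$, one needs the limiting argument to be bounded away from zero. I would verify this holds almost surely by noting $\hat\bmu_s=\bmu+\tfrac{1}{|\Ic|}\sum_{i\in\Ic}y^c_i\bxi_i\to\bmu$, so the argument tends to $\x^\top\bmu=y+\Nc(0,\sigma^2)$, which is nonzero with probability one. The other genuine (if modest) subtlety is that the exact rank-one collapse relies on debiasing by the \emph{known} $\sigma^2$; this is what turns the population covariance $\bmu\bmu^\top+\sigma^2\Iden$ into the pure projector.

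Finally, the same argument pinpoints why $L\ge 2$ is the exact threshold: a single degree-one factor $\Iden+c_1\X^\top\X$ already suffices to build the debiased covariance, whereas the $L=1$ model of Theorem~\ref{thm one layer} can only realize $\A\propto\Iden$ and so cannot produce the rank-one projector $\bmu\bmu^\top$. Apart from the limiting and nondegeneracy checks above, every remaining step is a routine computation once Proposition~\ref{prop multilayer} is invoked.
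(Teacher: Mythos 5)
Your proposal is correct and follows essentially the same route as the paper: invoke the label-propagation construction of Proposition~\ref{prop multilayer} to realize $\A=\tfrac{1}{n}\X^\top\X-\sigma^2\Iden$ (the paper states exactly this), then let $n\to\infty$ so the debiased empirical covariance converges to $\bmu\bmu^\top$. Your additional checks (collapsing the product to one active factor using $L\ge 2$, and verifying the limiting argument of the sign is almost surely nonzero) merely flesh out details the paper leaves implicit.
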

The proof follows directly from Proposition~\ref{prop multilayer} ({label propagation}), which shows that multi-layer linear attention can output $\x^\top(\X^\top \X/n-\sigma^2\Iden)\hat\bmu_s$. As $n \to \infty$, the empirical covariance converges to its expectation, i.e., $\X^\top \X / n - \sigma^2 \Iden \to \bmu \bmu^\top$. The results in Figure~\ref{fig diff n 10000} validate Theorem~\ref{thm n infty}, showing that as $n$ becomes large enough (i.e., $n=10000$), the predictions from both 2-layer and 5-layer linear attention models, as well as the SSPI-$1$ and SSPI-$\infty$ estimators, closely align with the classification error characterized in Theorem~\ref{thm optimal A}, depicted by the black dotted line.

Theorem~\ref{thm n infty} establishes that, with infinitely many unlabeled samples, an $L$-layer linear attention model (for $L \geq 2$) can implement the predictor characterized in Theorem~\ref{thm optimal A} using the optimal choice of $\A$, thereby achieving the classification error specified in~\eqref{n infty err}. In the following, we shift to the non-asymptotic setting where $n$ is finite and analyze the model's performance in this regime.

\begin{theorem}\label{thm non asymp}
     Let the prompt $\Z$ be generated as described in Section~\ref{sec icl}. Consider an $L$-layer linear attention model with $L \geq 2$ and denote its optimal prediction as $y^\st_{\att\mbar L}(\Z)$. Additionally, let $\hat\bmu_{s}$ be the SPI estimator defined in \eqref{pred spi}. Suppose that the number of labeled samples satisfies $np\geq 8d\sigma^2$ and $n>O(d)$ is sufficiently large. Then, there exists a universal constant $C>0$ such that the classification error satisfies
    \[
    \P(y_{\att\mbar L}^\st(\Z)\neq y)\leq Q\left(\frac{1-C\sqrt{d/n}}{\sigma}\right)+ e^{-d}.
    \]
\end{theorem}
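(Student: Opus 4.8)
The plan is to upper-bound the error of the best achievable depth-$\ge 2$ model by exhibiting one explicit realizable construction and analyzing its classification error. By Proposition~\ref{prop multilayer} (label propagation) with $L=2$ and constants $c=-\sigma^2,\ c_1=-1/(n\sigma^2)$, the model outputs $f(\Z)=\x^\top\A_0\hat{\bmu}_s$ with $\A_0=\X^\top\X/n-\sigma^2\Iden$, and the same $\A_0$ is reachable for every $L\ge 2$ (e.g.\ by padding with near-identity layers). Hence $\hat y_{\A_0}=\sgn{\x^\top\A_0\hat{\bmu}_s}$ is realizable, and since $y^\st_{\att\mbar L}$ denotes the optimal prediction we have $\P(y^\st_{\att\mbar L}\neq y)\le \P(\hat y_{\A_0}\neq y)$, so it suffices to bound the right-hand side. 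The reason $\A_0$ is the right object is that it is built from \emph{all} $n$ features (including the unlabeled ones) and satisfies $\E[\A_0]=\bmu\bmu^\top$, so it estimates the rank-one task direction; this is exactly the term a depth-$\ge 2$ model can form but a single layer cannot.

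Next I would reduce the classification error to a one-dimensional Gaussian tail by conditioning on everything except the query noise. Writing $\hat{\bmu}_s=\bmu+\w$ with $\w=|\Ic|^{-1}\sum_{i\in\Ic}y^c_i\bxi_i$, $\A_0=\bmu\bmu^\top+\Eb$, and $\x=y\bmu+\bxi$, a direct expansion using $\|\bmu\|=1$ gives
\[
 y\,\x^\top\A_0\hat{\bmu}_s=(a+b)+y\,\bxi^\top\vb,\qquad a=1+\bmu^\top\w,\quad b=\bmu^\top\Eb(\bmu+\w),\quad \vb=a\bmu+\Eb(\bmu+\w).
\]
Conditioned on $\Ic,\X,\y$ (so $a,b,\vb$ are fixed), the query noise $\bxi\sim\Nc(0,\sigma^2\Iden)$ is independent, so the decision statistic is Gaussian with mean $a+b$ and variance $\sigma^2\|\vb\|^2$, and the conditional error equals $Q\big((a+b)/(\sigma\|\vb\|)\big)$ whenever $a+b>0$.

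I would then define a good event $\Ec$ on which $|\bmu^\top\w|,\|\w\|\lesssim \sigma\sqrt{d/(np)}$ and $\|\Eb\|\lesssim (\sigma+\sigma^2)\sqrt{d/n}$. The Gaussian-norm bounds are standard; for $\Eb=\X^\top\X/n-\sigma^2\Iden-\bmu\bmu^\top$ I would split it into the cross term $\bmu\z^\top+\z\bmu^\top$ with $\z=n^{-1}\sum_i y^c_i\bxi_i$ (so $\|\z\|\lesssim\sigma\sqrt{d/n}$) and the centered-Wishart term $n^{-1}\sum_i\bxi_i\bxi_i^\top-\sigma^2\Iden$ (operator norm $\lesssim\sigma^2\sqrt{d/n}$ once $n>O(d)$). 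Under $np\ge 8d\sigma^2$ and $n>O(d)$ each failure probability is $e^{-\Omega(d)}$, so $\P(\Ec^c)\le e^{-d}$; the same condition $np\ge 8d\sigma^2$ keeps $\|\w\|=O(1)$ small, so $a$ is bounded away from $0$ and $a+b>0$ holds on $\Ec$.

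Finally I would lower-bound the standardized margin on $\Ec$. Using $\|\vb\|^2=a^2+2ab+\|\Eb(\bmu+\w)\|^2$ and $(a+b)^2=a^2+2ab+b^2$ together with $b=\bmu^\top\Eb(\bmu+\w)$ and $|b|\le\|\Eb(\bmu+\w)\|$, one obtains $(a+b)/\|\vb\|\ge 1-O(d/n)\ge 1-C\sqrt{d/n}$. The crucial point, and the main obstacle, is that the fluctuation $\bmu^\top\w$—which under $np\ge 8d\sigma^2$ is only $O(1)$-small, \emph{not} $O(\sqrt{d/n})$-small—enters both the mean $a+b$ and the standard deviation $\|\vb\|\approx a$, so it cancels in the ratio; a careless bound treating $\bmu^\top\w$ as a shift of the numerator's ``$1$'' would destroy the $1-C\sqrt{d/n}$ rate. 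Because $Q$ is decreasing, the conditional error is $\le Q\big((1-C\sqrt{d/n})/\sigma\big)$ on $\Ec$, and taking expectations and adding $\P(\Ec^c)\le e^{-d}$ yields the claim.
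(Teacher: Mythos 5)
Your proposal is correct and follows essentially the same route as the paper: realize $\A_0=\X^\top\X/n-\sigma^2\Iden$ via the label-propagation construction of Proposition~\ref{prop multilayer}, reduce to bounding $\P(\hat y_{\A_0}\neq y)$, condition to get a $Q(\cdot)$ of the standardized margin, and control the perturbation $\Eb=\A_0-\bmu\bmu^\top$ and $\w=\hat\bmu_s-\bmu$ by the same concentration bounds. The only (minor) difference is that you bound the ratio by expanding $(a+b)^2$ against $\|\vb\|^2$ and exploiting the exact cancellation of the $2ab$ cross term, which gives $1-O(d/n)$, whereas the paper compares directly to the ideal ratio $\bmu^\top\bmu\bmu^\top\hat\bmu_s/\tn{\bmu\bmu^\top\hat\bmu_s}=1$ and settles for $1-O(\sqrt{d/n})$; both yield the stated bound.
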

The proof is deferred to Appendix~\ref{app sec non asymp}. Note that when $n\gg d$, the classification error approaches the Bayes error, i.e., $\P(y_{\att\mbar L}^\st(\Z)\neq y)\approx Q(1/\sigma)$.

\subsection{Multi-layer Attention as Expectation Maximization and Belief Propagation}\label{em section}
In Section~\ref{sec multi poly}, we discussed how multi-layer linear attention can express polynomial functions of $\X^\top\X$. Here, we further explore the connection between multi-layer attention and the Expectation Maximization (EM) algorithm for semi-supervised learning. Beyond linear attention, we also highlight key differences between linear and softmax-based attention mechanisms, particularly in how they implement labeling strategies analogous to those in the EM algorithm.

Consider the following construction of the $\ell$-th layer attention weights:
\[
\W_q=\W_k=\begin{bmatrix}
    \Iden_d&0\\
    0&0
\end{bmatrix}\quad\text{and}\quad\W_v=\begin{bmatrix}
    0&0\\
    0&c_\ell
\end{bmatrix}.
\]
We examine both linear and softmax attention mechanisms. Let $\sft{\cdot}$ denotes the softmax operation that applies on the rows of a matrix. With this, the data update defined in~\eqref{def input ell} becomes:
\begin{align*}
    &\text{Linear attention:}~~~\quad\y_{\ell+1}=\y_\ell+c_\ell\X\X^\top\y_\ell\\
    &\text{Softmax attention:}\quad\y_{\ell+1}=\y_\ell+c_\ell\sft{\X\X^\top}\y_\ell
\end{align*}
In the case of linear attention, given the pseudo-labels $\y_\ell = [y_1^\ell, y_2^\ell, \ldots, y_n^\ell]$ at layer $\ell$, the model estimates the task mean using the SPI algorithm (cf.~\eqref{pred spi}) as $\hat{\bmu}_\ell = \X^\top \y_\ell$. The attention then updates each pseudo-label through the residual rule:
$$y_i^\ell\to y_i^\ell+c_\ell\x_i^\top \hat\bmu_{\ell},$$
where $c_\ell$ is a layer-specific coefficient. Owing to the linearity of this mechanism, the resulting pseudo-labeling strategy aligns with a linear EM-style update.

In contrast, softmax attention computes pairwise similarities via the softmax of dot products. Define $s_{ij}=\frac{e^{\x_i^\top\x_j}}{\sum_{j\leq n}e^{\x_i^\top\x_j}}$, then each pseudo-label is updated via: 
$$y_i^\ell\to y_i^\ell+c_\ell\sum_{j\leq n}s_{ij}y_{j}^\ell.$$
This update is a nonlinear, similarity-weighted pseudo-labeling strategy which can also be viewed as \emph{belief propagation}. The nonlinear nature highlights a key distinction between softmax and linear attention in how they emulate EM–like updates ($s_{ij}=\x_i^\top\x_j$ for linear attention).

\section{Experiments}\label{sec exp}




\begin{figure}[!t]
\centering
\begin{subfigure}{0.325\linewidth}    
    \centering
    \includegraphics[width=\linewidth]{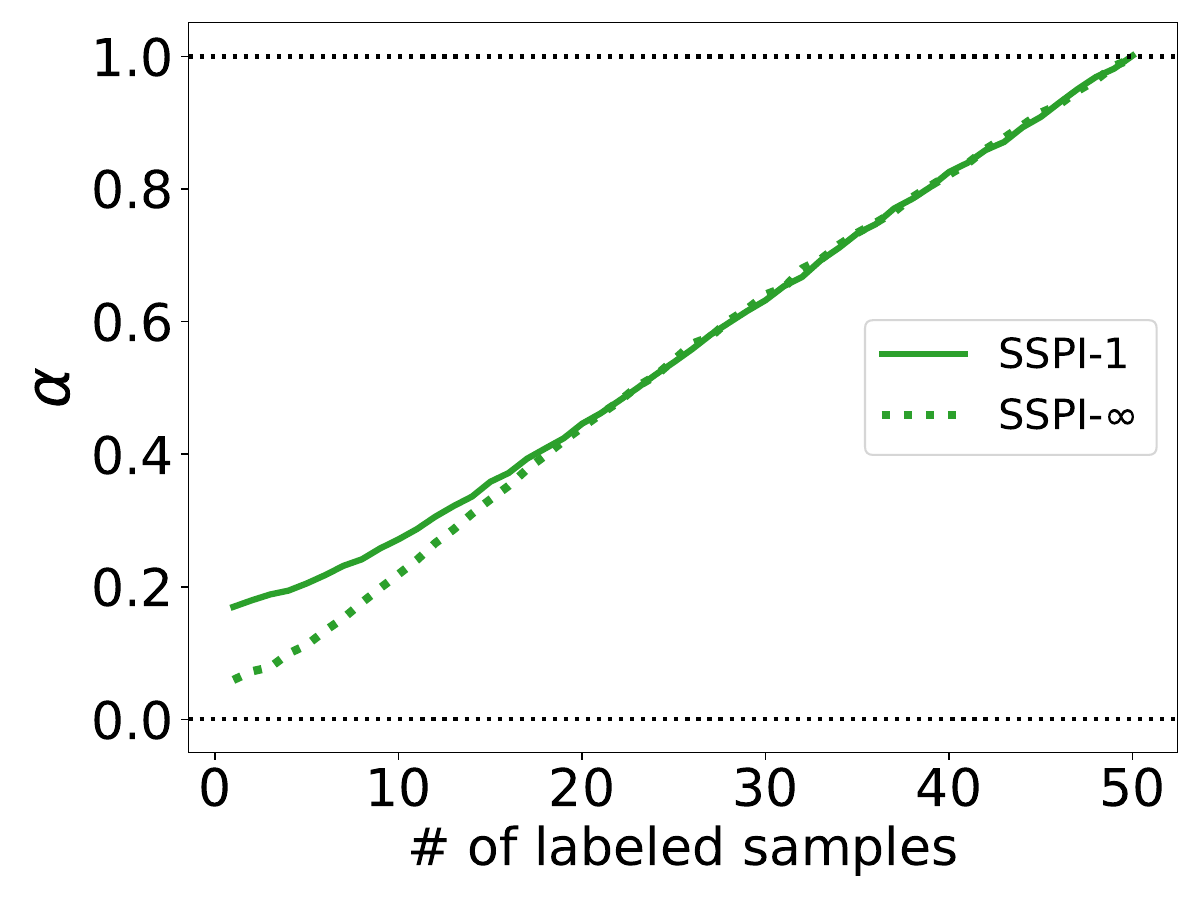}
    \caption{$n=50$, $p \in (0, 1]$.}\label{fig diff m alpha}
\end{subfigure}
\begin{subfigure}{0.325\linewidth}    
    \centering
    \includegraphics[width=\linewidth]{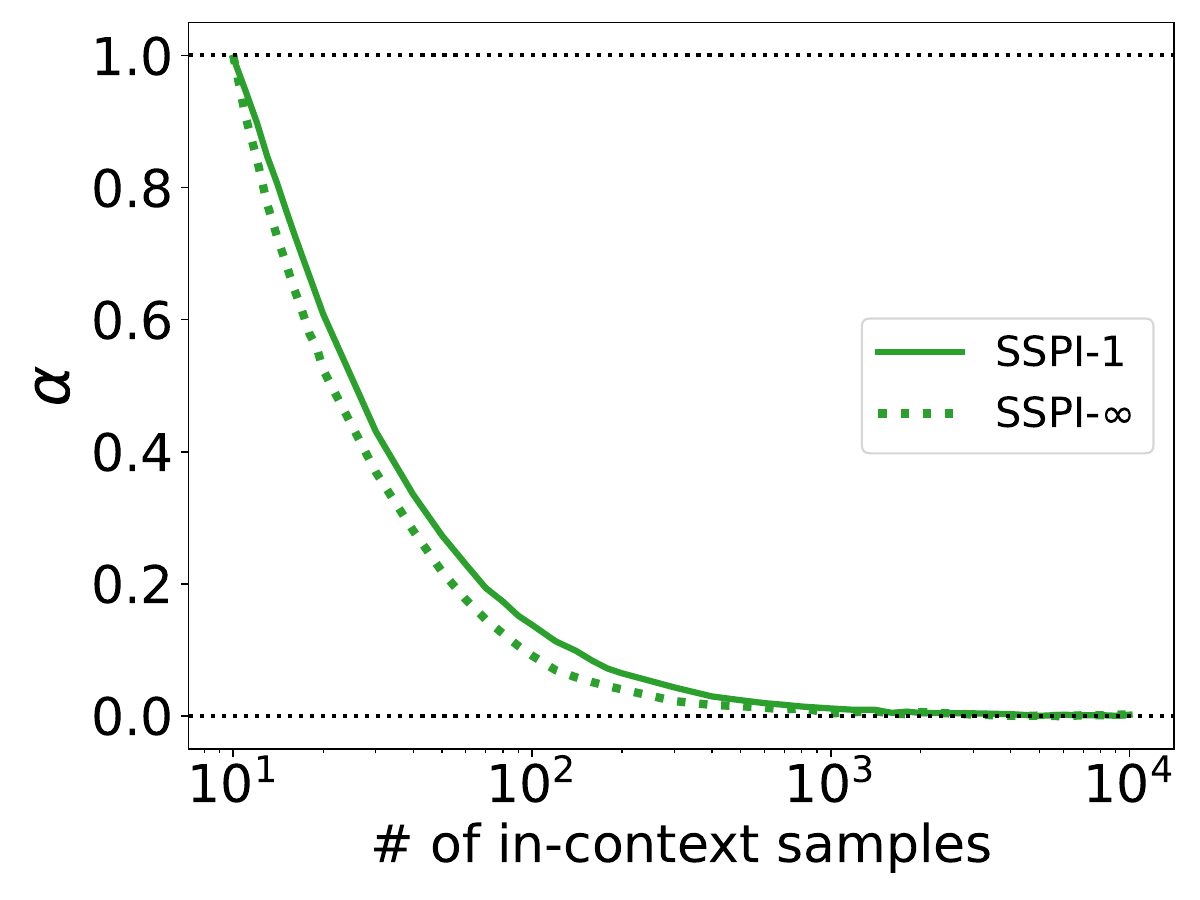}
\caption{$np=10$, $n \in \{10,\cdots,10000\}$.}\label{fig diff n 10000 alpha}
\end{subfigure}
\begin{subfigure}{0.325\linewidth}    
    \centering
    \includegraphics[width=\linewidth]{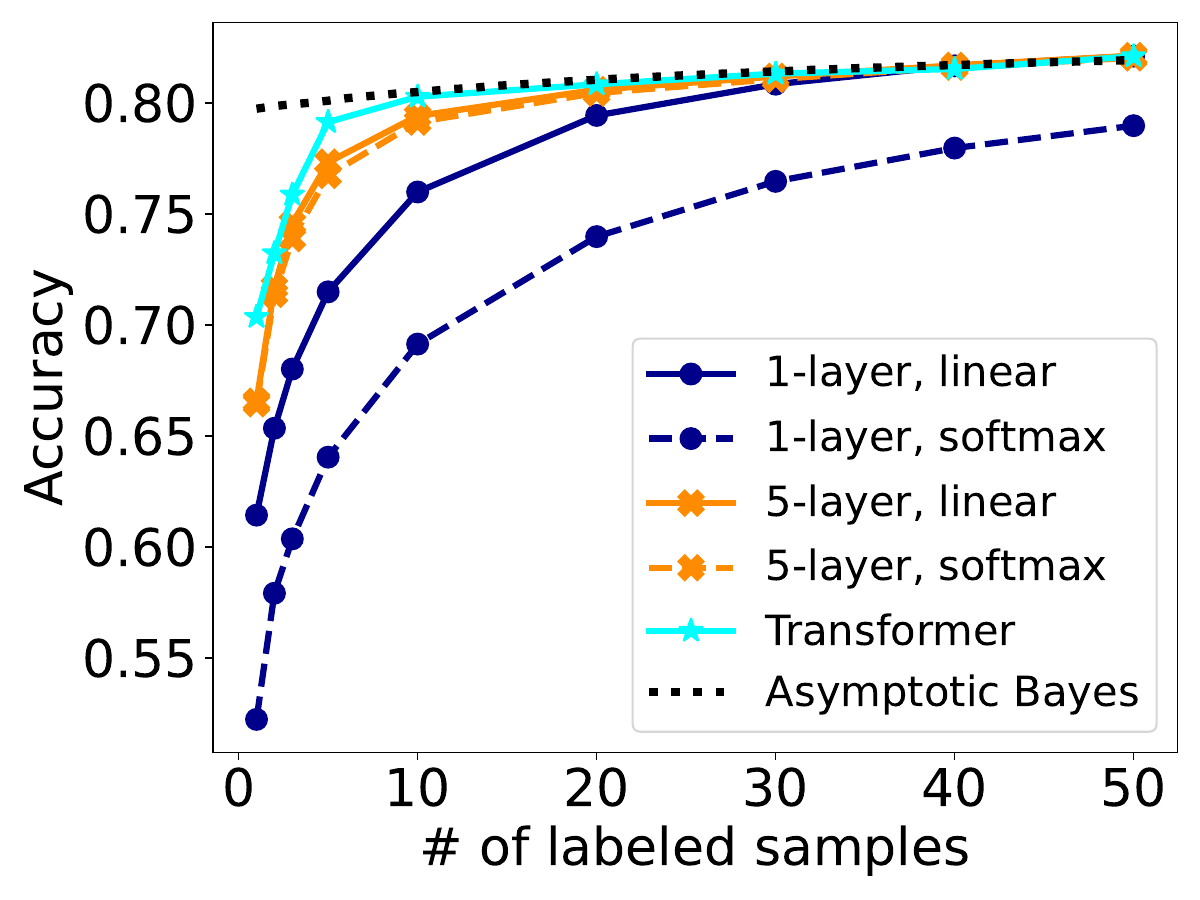}
\caption{$n=50$, $p \in (0, 1]$.}\label{fig tf}
\end{subfigure}
\caption{\small{Additional experimental results. (a)\&(b): Analysis of the optimal $\alpha$ values for the SSPI estimator (cf.~\eqref{pred sspi}) under varying $(n, p, k)$. Green solid and dotted curves represent optimal $\alpha$ values for SSPI-$1$ and SSPI-$\infty$, respectively. The SSPI results shown in Figure \ref{fig multi layer} use the corresponding $\alpha$ values from Figs. \ref{fig diff m alpha} and \ref{fig diff n 10000 alpha}. (c): Comparison of different model architectures for the SS-ICL problem. Dark blue and orange curves show results for 1-layer and 5-layer attention models, with solid and dashed curves representing linear and softmax attention, respectively. Cyan curves correspond to 5-layer Transformers. The black dotted curve shows the asymptotic Bayes-optimal error (cf.~\cite{lelarge2019asymptotic}). Results suggest the performance ordering: Transformer > linear attention > softmax attention. Further details are provided in Section~\ref{sec exp}.}}\label{fig alpha}
\end{figure}


In Sections~\ref{sec one layer} and~\ref{sec multilayer}, we introduced Figure~\ref{fig multi layer} and demonstrated its consistency with our theoretical results. In this section, we describe the experimental setup and implementation details. Additionally, we present further empirical findings to investigate additional questions of interest in Section~\ref{sec add exp}. Motivated by Proposition~\ref{prop looped}, which suggests that looping can help leverage unlabeled data, Section~\ref{sec tabpfn} introduces an algorithm based on the TabPFN, showing how it can enhance prediction performance by incorporating a small amount of unlabeled data and iterative pseudo-labeling through model looping.

\paragraph*{Experimental Setup}
Following Section~\ref{sec setup}, set $d=10$ and noise level $\sigma=1$. All models are trained using Adam optimizer with a learning rate of $10^{-3}$ for 40,000 epochs, with a batch size of 512. We use logistic loss in our experiments. Since our study focuses on the optimization landscape and model expressivity, and experiments are implemented via gradient descent, we repeat 10 trainings from random initialization and results are presented as the maximal test accuracy among those 10 trails.

\subsection{Additional Observations}\label{sec add exp}
\paragraph*{{Exploration of Optimal $\alpha$ Values}}
In Section~\ref{sec multilayer}, we introduced the SSPI-$k$ estimator (cf.~\eqref{pred sspi}), but did not discuss the choice of the mixing parameter $\alpha$, which plays a crucial role in balancing the contribution of the supervised estimator $\hat{\bmu}_s$. Specifically, $\alpha$ controls how much weight is given to the purely supervised signal. In the fully supervised case, the optimal choice is $\alpha = 1$, as $\hat{\bmu}_s$ corresponds to the optimal estimator.

In Figures~\ref{fig diff m alpha} and~\ref{fig diff n 10000 alpha}, we empirically examine the optimal values of $\alpha$. Given $\bmu \sim \text{Unif}(\mathbb{S}^{d-1})$, we define the optimal $\alpha$ as the minimizer of the following cosine similarity-based objective:
\[
\alpha^\st:=\min_{\alpha\in[0,1]}\Lc(\alpha)\quad\text{where}\quad\Lc(\alpha)=1-\E[\texttt{cosine\_similarity}(\bmu_{ss\mbar k},\bmu)].
\]
For each setting, we optimize $\alpha$ using the Adam optimizer for 10,000 epochs with a batch size of 128 and a learning rate of 0.01. The results are shown in Figs~\ref{fig diff m alpha} and~\ref{fig diff n 10000 alpha}.

In Figure~\ref{fig diff m alpha}, for both SSPI-$1$ and SSPI-$\infty$, the optimal $\alpha$ starts near zero when the number of labeled examples is small, reflecting the limited utility of $\hat{\bmu}_s$ in low-supervision regimes. As the number of labeled samples increases, $\alpha$ grows approximately linearly and approaches $1$ when the problem becomes fully supervised.
In Figure~\ref{fig diff n 10000 alpha}, when $n = 10$ and $p = 1$ (i.e., all examples are labeled), the optimal $\alpha$ begins at $1$. As $n$ increases and the fraction of unlabeled data grows, $\alpha$ decreases significantly. This trend indicates that as the volume of unlabeled data increases, the SSPI estimator adaptively reduces reliance on the supervised component $\hat{\bmu}_s$ and increases reliance on the semi-supervised component, which leverages the structure of the unlabeled data through $\X^\top \X$.

\paragraph*{{Comparison Across Different Model Architectures}}
Beyond linear attention, we investigate additional model architectures under our SS-ICL setting. The comparison results are presented in Fig.~\ref{fig tf}. The softmax attention model uses the same structure described in Section~\ref{sec icl}, with the only difference being the addition of a softmax operation in Eq.~\eqref{def att}. The Transformer model introduces further nonlinearity and capacity by incorporating multi-layer perceptrons (MLPs) and layer normalization. The Transformer experiments are conducted with 5-layer models.

When comparing weaker models—such as 1-layer linear (dark blue solid) and softmax (dark blue dashed) attention—we observe that softmax attention consistently underperforms linear attention. Notably, softmax attention fails to match the performance of the optimal supervised estimator, even when all labels are observed (i.e., when the number of labeled samples equals $n=50$). Furthermore, increasing the depth of softmax attention (orange dashed curve for 5-layer softmax) still does not surpass the performance of 5-layer linear attention (orange solid curve).
Among all architectures, the Transformer achieves the best performance due to its increased model capacity and expressiveness. Compared with Fig.~\ref{fig diff m}, where the orange and dark blue markers (linear attention) are identical, the Transformer significantly improves accuracy. This improvement highlights that SSPI, while effective, is not the optimal semi-supervised estimator.
Although our semi-supervised setting assumes isotropic data, the characterization of its optimal algorithm remains an open and foundational problem for future exploration.
In the figure, we also include the asymptotic Bayes-optimal curve (black dotted; derived from \cite{lelarge2019asymptotic}) . As the number of samples increases, the results from linear attention, softmax attention, and Transformer all converge toward this optimal curve. We attribute the initial performance gap, particularly at low values along $x$-axis (e.g., $np = 1$), to the scarcity of labeled data.

\begin{algorithm}[t]
\caption{\textbf{LoopTabFM}: {Looping Tabular FM with Soft Pseudo-labels and Risk-aware Updates}}
\begin{algorithmic}[1]
\Require Dataset $\Dc_{\text{lab}},\Dc_{\text{unlab}}$, looping iterations $K$
\Procedure{\texttt{Looping}}{$\Dc_{\text{lab}},\Dc_{\text{unlab}},K$}
  \State \texttt{FM}$_{0}$ $\gets\text{TabPFN-v2}(\Dc_{\text{lab}})$\Comment{\texttt{FM}$_k$ corresponds to model of Loop-$k$.}
  \State $\Dc_{\text{unlab}}\gets\texttt{FM}_0(\Dc_{\text{unlab}})$ \Comment{Assign pseudo labels via $\hat y^\text{soft}\gets \texttt{FM}_{0}$$(\x\in\Dc_\text{unlab})$.}
    \State $\texttt{FM}_{\text{best}}\gets \texttt{FM}_{0}$
    \State $\Rc_{\text{val}}=\texttt{Val\_Risk}(\Dc_{\text{unlab}})$
    \For{Looping iteration $k=1,\dots,K$}
    \State $\texttt{FM}_k\gets\text{TabPFN-v2}(\Dc_{\text{lab}}\cup\Dc_{\text{unlab}})$
    \State $\Dc_{\text{unlab}}\gets\texttt{FM}_k(\Dc_{\text{unlab}})$ \Comment{Update pseudo labels via $\hat y^\text{soft}\gets \texttt{FM}_{k}$$(\x\in\Dc_\text{unlab})$.}

    \If{$\texttt{Val\_Risk}(\Dc_{\text{unlab}})<\Rc_{\text{val}}$}
      \State $\texttt{FM}_{\text{best}}\gets\texttt{FM}_k$
      \State $\Rc_{\text{val}}=\texttt{Val\_Risk}(\Dc_{\text{unlab}})$
    \EndIf
  \EndFor
  \State \Return \texttt{FM}$_\text{best}$
\EndProcedure

\Procedure{\texttt{Val\_Risk}}{$\Dc_{\text{unlab}}$}
  \State \Return $\frac{1}{|\Dc_{\text{unlab}}|}\sum_i\min(|\hat y^{\text{soft}}_i-1|,\;|\hat y_i^{\text{soft}}+1|)$
  \State\Comment{$\hat y^{\text{soft}}$ corresponds to the assigned soft label for feature in $\Dc_{\text{unlab}}$.}
\EndProcedure
\end{algorithmic}
\label{alg:looping-tabpfn}
\end{algorithm}

\subsection{Tabular Experiments}\label{sec tabpfn}

\begin{table}[t]
    \centering
    \scriptsize
    \setlength\tabcolsep{3pt}
    \caption{\small{Comparison of test accuracy (\%) between the baseline (Loop-0) and LoopTabFM (Algorithm \ref{alg:looping-tabpfn}) after 1 to 5 iterations using TabPFN-v2. Each result is averaged over 100 random trials. The highest test accuracy for each dataset is highlighted in bold. The final column reports the relative improvement (\%) of Loop-5 over the baseline, computed as (Loop-5 $-$ Loop-0)$/$Loop-0$\times$100\%. Positive signs indicate a performance improvement over the baseline, while negative signs indicate a performance drop.}}
    \label{tab:tabpfn}
    
    \begin{tabular}{l|lll |c cc cc c|r}
        \toprule
        \textbf{OpenML ID} 
        & \textbf{\# of features} 
        & \textbf{\# of samples} 
        & \textbf{Class imbalance}
        & \textbf{Loop-0} 
        & \textbf{Loop-1} 
        & \textbf{Loop-2} 
        & \textbf{Loop-3} 
        & \textbf{Loop-4} 
        & \textbf{Loop-5}&\textbf{Rel. Imp. (\%)}\\
        \midrule
        3&36&3196&1.09&58.62&	58.63&	58.45&	58.69&	\textbf{59.00}&	58.97&0.60 $(+)$\\
        31&20&1000&2.33&\textbf{66.18}&	65.95&	66.05&	65.58&	65.52&	65.07&1.68 $(-)$\\
        1049&37&1458&7.19&72.00&	75.62&	79.48&	80.31&	\textbf{81.49}&	81.40&13.06 $(+)$
\\
        1067&21&2109&5.47&73.12&	76.59&	77.94&	77.92&	78.57&	\textbf{78.60}&7.50 $(+)$
\\
        1464&4&748&3.20&60.46&	63.96&	70.20&	71.29&	\textbf{72.26}&	72.18&19.38 $(+)$
\\
        1487&72&2534&14.84&82.54&	87.67&	88.57&	88.27&	\textbf{89.85}&	89.56&8.51 $(+)$
\\
        1489&5&5404&2.41&66.40&	67.62&	\textbf{68.30}&	68.14&	68.21&	68.18&2.69 $(+)$
\\
        1494&41&1055&1.96&62.24&	63.05&	64.62&	65.94&	\textbf{66.07}&	66.05&6.12 $(+)$
\\
        40701&20&5000&6.07&66.45&	70.65&	75.99&	\textbf{78.18}&	78.00&	77.70&16.93 $(+)$
\\
        40900&36&5100&67&\textbf{98.53}&	98.41&	98.39&	98.39&	98.27&	98.26&0.28 $(-)$
\\
        40981&14&690&1.25&73.56&	74.41&	74.67&	\textbf{74.99}&	74.93&	74.94&1.88 $(+)$
\\
        40983&5&4839&17.54&79.71&	85.04&	89.36&	\textbf{92.94}&	92.90&	92.75&16.35 $(+)$
\\
        41143&144&2984&1&64.64&	64.80&	65.06&	65.17&	\textbf{65.29}&	65.13&0.76 $(+)$
\\
        41144&259&3140&1.01&50.70&	50.63&	50.68&	50.67&	50.71&	\textbf{50.77}&0.14 $(+)$
\\
        41145&308&5832&1&56.16&	\textbf{56.28}&	56.21&	56.24&	56.19&	56.22&0.12 $(+)$
\\
        41146&20&5124&1&71.26&	73.90&	75.39&	75.84&	76.02&	\textbf{77.07}&8.51 $(+)$
\\
        41156&48&4147&3.03&67.74&	69.78&	70.64&	\textbf{71.82}&	71.72&	71.74&5.90 $(+)$
\\
        \midrule
        \textbf{Average}&&&&68.84&70.76&72.35&72.96&\textbf{73.24}&73.21&6.35 $(+)$\\
        \bottomrule
    \end{tabular}
\end{table}

To further investigate how model looping (Proposition~\ref{prop looped}) can improve label prediction, we propose the LoopTabFM algorithm that addresses unlabeled data by iteratively assigning pseudo-labels, with its details outlined in Algorithm~\ref{alg:looping-tabpfn}.
Suppose that we are given labeled $\Dc_{\text{lab}}$ and unlabeled $\Dc_{\text{unlab}}$ datasets. The overall workflow of the algorithm proceeds as follows: 
\begin{enumerate}
    \item \textbf{Base Model:} Perform ICL using TabPFN on the labeled dataset $\Dc_{\text{lab}}$ and treat the resulting model as the base model (Loop-0). The corresponding test accuracies are reported in Table~\ref{tab:tabpfn}.
    \item \textbf{Pseudo-Label Assignment:} Using the current model (e.g., Loop-\( k \)) to generate predictions for the unlabeled data \( \Dc_{\text{unlab}} \). Assign soft pseudo-labels based on these predictions. Note that the model outputs are scalars (i.e., elements of \( \mathbb{R} \)) and can be interpreted as soft labels.
    \item \textbf{Model Update:} Construct a new prompt by combining the labeled examples with their true labels and the unlabeled examples with their assigned soft pseudo-labels. Perform ICL using TabPFN on this combined prompt to obtain an updated model (Loop-$(k+1)$). Repeat this process from Step 2 until the maximum number of looping iterations is reached. 
    \item[$\st$] \textbf{Model Validation:} To improve the stability of the looping process, we introduce an additional validation step and retain the model with the lowest validation risk as the final (best) model. 
    Specifically, after assigning soft pseudo-labels to the unlabeled data, i.e., $\Dc_{\text{unlab}}=\{(\x_i,\hat y_i^{\text{soft}})_{i=1}^n\}$, we compute the validation risk over these pseudo-labeled examples as follows:
\[
\texttt{Val\_Risk}(\Dc_{\text{unlab}})=\frac{1}{n}\sum_{i\in[n]}\min\left(\left|\hat y^{\text{soft}}_i-1\right|,\;\left|\hat y_i^{\text{soft}}+1\right|\right),
\]
which penalizes predictions that deviate from confident binary labels $\pm1$.
\end{enumerate}

%
We evaluated the effectiveness of our proposed looping strategy by iteratively applying TabPFN-v2 on real-world binary classification benchmarks used in \cite{hollmann2025accurate}. 
%
The results are summarized in Table~\ref{tab:tabpfn}, where each entry represents an average over 100 random splits of the dataset, with 80\% of the data used as the test set in each split. 

For each experiment, we randomly sample 10 labeled and 10 unlabeled examples, ensuring that the labeled set includes at least one example from each class. As a baseline (Loop-0), we apply TabPFN-v2 using only the labeled data. The corresponding test accuracies are reported in the ``Loop-0'' column of Table~\ref{tab:tabpfn}. We compare this to models updated through up to $k \leq 5$ iterations of pseudo-label update, with results shown in the ``Loop-$k$'' columns. The final column reports the relative improvement (Rel. Imp.) over the baseline. Our results demonstrate that the looping strategy can significantly improve test accuracy. For instance, on OpenML datasets 1049, 1464, 40701, and 40983, accuracy improves by more than 10\% over the baseline using only 10 additional unlabeled samples. 
The last row of the table reports average performance across datasets, revealing that the majority of performance gains occur in the first two iterations. This observation aligns with our synthetic experiments using multi-layer models (Figure~\ref{fig multi layer}), where the improvement from 1-layer to 2-layer is substantially greater than the improvement from 2-layer to 5-layer. 
These findings highlight that explicitly looping the tabular foundation model to iteratively refine soft pseudo-labels of unlabeled data using only a few iterations can substantially enhance performance.

{As shown and discussed, our LoopTabFM algorithm enhances model performance. However, this improvement is not consistent across all datasets. For example, performance drops on the OpenML datasets with IDs 31 and 40900. This may be attributed to factors such as noise levels in the raw data, class imbalance, or other dataset-specific characteristics. 
In contrast to our synthetic experimental setting, where the model is pretrained in a meta-learning fashion on the distribution of the given dataset, TabPFN is used as a general-purpose pretrained foundation model and applied directly to target datasets in a single-shot inference setting. Prior work \citep{ye2025closer} has also shown that TabPFN can be sensitive to input length, which may further affect performance consistency.
Despite these limitations, our experiments with TabPFN offer an initial insight into how unlabeled data and iterative looping can be leveraged to improve predictive performance. These findings suggest promising future directions, such as designing data-aware looping algorithms that adapt to dataset-specific properties.}

\section{Discussion and Limitations}

Our paper introduces a theoretical study of semisupervised in-context learning and characterizes how transformer, specifically linear attention, models can harness unlabeled data in their context window to make inference. We show that depth is crucial to go beyond supervised estimation and utilize unlabeled data, and the latter is achieved by constructing estimators of the form $\hat{\bmu}=\sum_{i= 0}^K a_i (\X^\top\X)^i\X^\top\y$. $\log K$ depth suffices to express a $K$th order polynomial which is in line with our synthetic and real experiments that corroborate that mild amount of depth/looping already achieves most of the benefit. Our core theoretical results are limited to linear attention models and it is important to understand the capabilities of the full transformer architecture. Indeed, transformer (MLP+softmax) empirically outperforms a linear attention model with equal number of layers, well approximating the Bayes optimal semisupervised estimator. It would also be exciting to go beyond the classification setting and examine how self-generated CoT rationales, as in \citep{wu2023many}, can enhance ICL capabilities for tasks that require reasoning/autoregression. {Additionally, our proposed LoopTabFM algorithm demonstrates that iteratively pseudo-labeling unlabeled data can indeed enhance predictive performance for tabular tasks. However, there remains significant potential for developing more intelligent, data-specific algorithms that more effectively leverage unlabeled data to further improve model performance.}

\section*{Acknowledgements}

{This work was supported in part by the National Science Foundation grants CCF-2046816, CCF-2403075, CCF-2008020, the Office of Naval Research grant N000142412289, and by gifts/awards from Open Philanthropy, Amazon Research, and Google Research.}

\bibliography{ref}
\bibliographystyle{colm2025_conference}

\newpage
\addtocontents{toc}{\protect\setcounter{tocdepth}{3}}
\tableofcontents
\appendix
\section{Analysis of Single-layer Linear Attention}\label{app sec one layer}
\subsection{Supporting Lemmas}
Recap the SPI estimator from \eqref{pred spi}. Given a semi-supervised dataset $(\x_i,y_i)_{i=1}^n$ as described in Section~\ref{sec data}, let $\Ic$ denote the token indices set corresponding to the labeled demonstrations, that is, we have
\begin{align}\label{app def Ic}
y_{i}=\begin{cases}
    y_i^c,&i\in\Ic\\
    0,&otherwise.
\end{cases}
\end{align}
Then, the SPI estimates the task mean via 
\[
\hat\bmu_s=\frac{1}{|\Ic|}\sum_{i\in\Ic}y_i\x_i.
\]
Let $\W\in\R^{d\times d}$ be the preconditioning matrix. We define the following objective:
\begin{align}
    \W^\st:=\arg\min_{\W\in\R^{d\times d}}\Lct(\W)\quad\text{where}\quad\Lct(\W)=\E\left[\left(\x^\top\W\sum_{i\in\Ic} y_i\x_i-y\right)^2\right].\label{app W loss}
\end{align}
Here, we set $(\x,y)$ to be the query feature and its corresponding true label. The expectation subsumes the randomness in $(\x_i,y_i),(\x,y)$ as described in Section~\ref{sec data}. 

In the following, we provide a lemma that establishes equivalence between optimizing $\Lc_{\att\mbar1}(\Wc^{(1)},\hb)$ (cf.~\eqref{obj att} and choosing $L=1$) and $\Lct(\W)$.



\begin{lemma}\label{lemma reduce}
    Consider ICL problem described in Section~\ref{sec icl} with prompt defined in \eqref{def Z}.  Consider training a single-layer linear attention with squared loss, that is, $L=1$ and $\ell(y,\hat y)=(y-\hat y)^2$. Recall the objectives from \eqref{obj att} and \eqref{app W loss}, and let $\Lc^\st_{\att\text{-}1}$ and $\Lct^\st:=\Lct(\W^\st)$ be their corresponding optimal losses where $\W^\st$ is defined in \eqref{app W loss}. 
    %
    Then, we have 
    \begin{align}
    \Lc^\st_{\att\text{-}1}=\Lct^\st.\label{app equal loss}
    \end{align}
    Additionally, let $f_{\att\mbar1}^\st:\R^{(n+1)\times(d+1)}\to\R$ denote the optimal prediction (associated with the optimal loss $\Lc^\st_{\att\mbar1}$). We have that $f_{\att\mbar1}^\st$ is unique and for any prompt $\Z$ (cf.~\eqref{def Z})
    \begin{align}
        f_{\att\mbar1}^\st(\Z)=\x^\top\W^\st\sum_{i\in\Ic}y_i\x_i.\label{app equal pred}
    \end{align}
\end{lemma}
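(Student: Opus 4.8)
The plan is to reduce the joint optimization over $(\W_q,\W_k,\W_v,\hb)$ to an optimization over a single effective matrix $\W$, and then to show that the "extra" terms the attention generates cannot lower the squared loss.

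First I would write the single-layer output at the query token explicitly. Since the query token is $[\x^\top~0]^\top$ (zero label slot) and the mask $\M$ restricts the value-side sum to the first $n$ tokens, only the top-left block $\Pb\in\R^{d\times d}$ and top-right block $\qb\in\R^{d}$ of $\W_q\W_k^\top$ survive, and the head acts through $\W_v\hb=:[\vb^\top~\gamma]^\top$ with $\vb\in\R^d,\gamma\in\R$. Writing $\z_i=[\x_i^\top~y_i]^\top$, the score is $s_i=\x^\top\Pb\x_i+y_i\,\x^\top\qb$ and the value read-out is $\vb^\top\x_i+\gamma y_i$, so
\[
f_{\att\mbar1}(\Z)=\sum_{i=1}^n\big(\x^\top\Pb\x_i+y_i\,\x^\top\qb\big)\big(\vb^\top\x_i+\gamma y_i\big).
\]
Expanding and using $y_i^2=\mathbb{1}[i\in\Ic]$ (hence $\sum_i y_i^2=|\Ic|$) together with $\sum_i y_i\x_i=\X^\top\y$, this splits into exactly three pieces: a label-linear term $\x^\top(\gamma\Pb+\qb\vb^\top)\X^\top\y$, a label-free term $T_1:=\x^\top\Pb(\X^\top\X)\vb$, and a term $T_2:=\gamma|\Ic|\,\x^\top\qb$. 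Setting $\W:=\gamma\Pb+\qb\vb^\top$, the first piece is exactly $\x^\top\W\X^\top\y$, and since $\Pb$ and $\gamma$ are free, $\W$ ranges over all of $\R^{d\times d}$. The achievability direction $\Lc^\st_{\att\mbar1}\le\Lct^\st$ is then immediate: taking $\qb=0,\vb=0,\gamma=1,\Pb=\W$ kills $T_1,T_2$ and realizes $f=\x^\top\W\X^\top\y$ for any target $\W$.

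The core of the argument is the reverse inequality, and the main obstacle is showing $T_1,T_2$ never help. Here I would exploit a distributional symmetry: the involution that simultaneously flips the task vector $\bmu\mapsto-\bmu$ and all latent labels $y_i^c\mapsto-y_i^c$ (including the query's) leaves every feature $\x_i=y_i^c\bmu+\bxi_i$ and the query $\x$ unchanged, while sending $\y\mapsto-\y$ and $y\mapsto-y$; because the prior on $\bmu$ is centrally symmetric, the $y_i^c$ are symmetric $\pm1$, and the revelation pattern $\Ic$ is independent of the signs, this map is measure preserving, giving $(\X,\y,\x,y)\stackrel{d}{=}(\X,-\y,\x,-y)$. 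Under it, $\x^\top\W\X^\top\y$ and $y$ are odd, whereas $T_1$ and $T_2$ (which depend on $\X,\x,|\Ic|$ but not on the label signs) are even. Hence all odd$\times$even cross-expectations vanish and the loss decomposes as
\[
\E\big[(f_{\att\mbar1}(\Z)-y)^2\big]=\E\big[(\x^\top\W\X^\top\y-y)^2\big]+\E\big[(T_1+T_2)^2\big]\ \ge\ \Lct(\W)\ \ge\ \Lct^\st .
\]
As this holds for every attention configuration, $\Lc^\st_{\att\mbar1}\ge\Lct^\st$, and together with achievability we obtain \eqref{app equal loss}.

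Finally, for \eqref{app equal pred} and uniqueness, the same decomposition shows any minimizer must satisfy $\E[(T_1+T_2)^2]=0$, i.e.\ $T_1+T_2\equiv0$ almost surely, and must minimize $\Lct(\W)$. Since $\x^\top\W\X^\top\y=\langle\W,\x\y^\top\X\rangle$ is linear in $\W$, the objective $\Lct$ is a convex quadratic and the optimal fitted value $\x^\top\W^\st\X^\top\y$ is the (unique) least-squares projection of $y$ onto the span of these predictors. Therefore $f^\st_{\att\mbar1}(\Z)=\x^\top\W^\st\sum_{i\in\Ic}y_i\x_i$ for every $\Z$, as claimed. The delicate point to get right is verifying the parities of each of the three terms and confirming that the effective $\W$, though it shares parameters with $T_1,T_2$, still constitutes a feasible point of the reduced problem so that the chain $\ge\Lct(\W)\ge\Lct^\st$ is valid.
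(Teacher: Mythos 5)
Your proposal is correct and follows essentially the same route as the paper's proof: the same block decomposition of $\W_q\W_k^\top$ and $\W_v\hb$ yielding the label-linear term $\x^\top(\gamma\Pb+\qb\vb^\top)\X^\top\y$ plus the two label-free terms, the same orthogonal decomposition of the squared loss, the same achievability choice, and uniqueness via convexity of $\Lct$. The one genuine (and welcome) difference is how the cross-term is killed: you invoke the global sign-flip symmetry $(\bmu,(y_i^c))\mapsto(-\bmu,(-y_i^c))$, which fixes $(\X,\x)$ and negates $(\y,y)$, so that odd-times-even expectations vanish in one stroke, whereas the paper verifies the same cancellation by explicitly computing four moment terms using the vanishing odd moments of $\bmu$ and the noise; your argument is a cleaner packaging of the identical underlying symmetry and introduces no gap.
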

\begin{proof}
Recap the single-layer linear attention model and its prediction from \eqref{def att} and \eqref{def f att}. We have
\begin{align}
f_{\att\text{-}1}(\Z)=\hb^\top\att(\Z;\Wc)_{[n+1]}\quad\text{where}\quad\att(\Z;\Wc)=(\Z\W_q\W_k^\top\Z^\top)\M\Z\W_v\label{app def one layer att}
\end{align}
with $\Wc:=\{\W_q,\W_k,\W_v\}$ being the set of the query, key and value matrices of the attention. 
Since $\Wc$ and $\hb$ are tunable parameters, without loss of generality and for simplicity, let 
\[
\W:=\W_q\W_k^\top\quad\text{and}\quad\hbb:=\W_v\hb.
\]
Following the proof of \citealp[Proposition 1]{li2024fine}, similarly, we denote 
\[
\W=\begin{bmatrix}
    \Wb&\w_1\\
    \w_2^\top&w
\end{bmatrix}\qquad\text{and}\qquad\hbb=\begin{bmatrix}
    \hb_1\\
    h
\end{bmatrix},
\]
where $\Wb\in\R^{d\times d}$, $\w_1,\w_2,\hb_1\in\R^d$, and $w,h\in\R$.

Additionally, let $\Ic$ denote the token indices set corresponding to the labeled demonstrations (cf.~\eqref{app def Ic}). 
Recall the prompt $\Z$ from \eqref{def Z}, and $\X=[\x_1~\cdots~\x_n]^\top\in\R^{n\times d}$ and $\y=[y_1~\cdots~y_n]^\top\in\R^n$ from \eqref{def X y}. Then we get
\begin{align}\label{app def Z}
    \Z=\begin{bmatrix}
        \x_1&\x_2&\cdots&\x_n&\x\\
        y_1& y_2&\cdots& y_n& 0
    \end{bmatrix}^\top=\begin{bmatrix}
        \X^\top&\x\\
        \y^\top&0
    \end{bmatrix}^\top\in\R^{(n+1)\times (d+1)}.
\end{align}

Combining \eqref{app def one layer att} and \eqref{app def Z} together, we can rewrite the one-layer linear prediction as 
\begin{align*}
    f_{\att\mbar1}(\Z)&=[\x^\top~~0]\W\Z^\top\M\Z\hbb\\
    &=[\x^\top~~0]\begin{bmatrix}
    \Wb&\w_1\\
    \w_2^\top&w
\end{bmatrix}\begin{bmatrix}
        \X^\top&\x\\
        \y^\top&0
    \end{bmatrix}\begin{bmatrix}
        \Iden_n&0\\0&0
    \end{bmatrix}\begin{bmatrix}
        \X^\top&\x\\
        \y^\top&0
    \end{bmatrix}^\top\begin{bmatrix}
    \hb_1\\
    h
\end{bmatrix}\\
    &=[\x^\top\Wb~~\x^\top\w_1]\begin{bmatrix}
        \X^\top\X&\X^\top\y\\
        \y^\top\X&\y^\top\y
    \end{bmatrix}\begin{bmatrix}
    \hb_1\\
    h
\end{bmatrix}\\
&=[\x^\top\Wb~~\x^\top\w_1]\begin{bmatrix}
    \X^\top\X\hb_1+h\X^\top\y\\
    \y^\top\X\hb_1+h\y^\top\y
\end{bmatrix}\\
&=\x^\top\Wb(\X^\top\X\hb_1+h\X^\top\y)+\x^\top\w_1(\y^\top\X\hb_1+h\y^\top\y)\\
&=\x^\top(h\Wb+\w_1\hb_1^\top)\X^\top\y+\x^\top(\Wb\X^\top\X\hb_1+h\y^\top\y\w_1)\\
&=\x^\top\Wt\X^\top\y+\x^\top(\Wb\X^\top\X\hb_1+mh\w_1)
\end{align*}
where $\Wt:=h\Wb+\w_1\hb_1^\top$ and we define $m:=|\Ic|$. 


Next, recall the loss from \eqref{obj att} and consider the squared loss function, $\ell(y,\hat y)=(y-\hat y)^2$. We have
\begin{align*}
\Lc_{\att\mbar1}(\Wc^{(1)},\hb)&=
\E\left[(f_{\att\mbar1}(\Z)-y)^2\right]\\
&=\E\left[\left(\x^\top\Wt\X\y+\x^\top\left(\Wb\X^\top\X\hb_1+mh\w_1\right)-y\right)^2\right]\\
&=\E\left[\left(y\x^\top\Wt\X\y+y\x^\top\left(\Wb\X^\top\X\hb_1+mh\w_1\right)-1\right)^2\right].
\end{align*}

For simplicity and without loss of generality,  we omit $y$ and use $\x$ to represent $y\x$. Note that the distribution of (updated) $\x$ is not conditioned on its class and given mean vector $\bmu$, it follows  $\x\sim\Nc(\bmu,\sigma^2\Iden)$. Similarly, let $\x_i$ represent $y_i^c\x_i$. We can then write 
\begin{align}
\Lc_{\att\mbar1}(\Wc^{(1)},\hb)&=\E\left[\left(\x^\top\Wt\sum_{i\in\Ic}\x_i+\x^\top\left(\Wb\X^\top\X\hb_1+mh\w_1\right)-1\right)^2\right]\label{app loss 1}\\
&=\E\left[\left(\x^\top\Wt\sum_{i\in\Ic}\x_i-1\right)^2\right]+\E\left[\left(\x^\top\left(\Wb\X^\top\X\hb_1+mh\w_1\right)\right)^2\right]\nn\\
&\quad+2\E\left[\left(\x^\top\Wt\sum_{i\in\Ic}\x_i-1\right)\left(\x^\top\left(\Wb\X^\top\X\hb_1+mh\w_1\right)\right)\right].\nn
\end{align}

We start with showing that for any given parameters $\W\in\R^{(d+1)\times(d+1)},\hb\in\R^{d+1}$, the component $\E[(\x^\top\Wt\sum_{i\in\Ic}\x_i-1)(\x^\top(\Wb\X^\top\X\hb_1+mh\w_1))]=0$. 
To prove it, we first expand
\begin{align*}
    &(\x^\top\Wt\sum_{i\in\Ic}\x_i-1)(\x^\top(\Wb\X^\top\X\hb_1+mh\w_1))\\
    &=\underset{(a)}{\underbrace{(\x^\top\Wt\sum_{i\in\Ic}\x_i)(\x^\top\Wb\X^\top\X\hb_1)}}-\underset{(b)}{\underbrace{\x^\top\Wb\X^\top\X\hb_1}}+\underset{(c)}{\underbrace{(\x^\top\Wt\sum_{i\in\Ic}\x_i)(mh\x^\top \w_1)}}-\underset{(d)}{\underbrace{mh\x^\top \w_1}}.
\end{align*}
In the following, we consider the expectations of $(a),(b),(c),(d)$ sequentially, all of which take the value zero. First note that since $\bmu\sim\text{Unif}(\mathbb{S}^{d-1})$ and $(\bxi_i)_{i=1}^n,\bxi\sim\Nc(0,\sigma^2\Iden)$, the odd moments of $\bmu$, $\bxi$ and  $\bxi_i,i\in[n]$ are all zeros.

\begin{align*}
    \quad(a):\quad&\E\left[(\x^\top\Wt\sum_{i\in\Ic}\x_i)(\x^\top\Wb\X^\top\X\hb_1)\right]\\
    &=\E\left[(\bmu+\bxi)^\top\Wt\sum_{i\in\Ic}(\bmu+\bxi_i)(\bmu+\bxi)^\top\Wb\sum_{i\in[n]}(\bmu+\bxi_i)(\bmu+\bxi_i)^\top\hb_1\right]\hspace*{300pt}\\
    &=\sum_{i\in\Ic}\sum_{j\in[n]}\E\left[(\bmu+\bxi)^\top\Wt(\bmu+\bxi_i)(\bmu+\bxi)^\top\Wb(\bmu+\bxi_j)(\bmu+\bxi_j)^\top\hb_1\right]\\
    &=\sum_{i\in\Ic}\sum_{j\in[n]}\E\left[\bmu^\top\Wt\bmu\bmu^\top\Wb(\bmu\bmu^\top+\bxi_j\bxi_j^\top)\hb_1+\bxi^\top\Wt\bmu\bxi^\top\Wb(\bmu\bmu^\top+\bxi_j\bxi_j^\top)\hb_1\right]\\
    &=0,
\end{align*}
\begin{align*}
    \quad(b):\quad&\E\left[\x^\top\Wb\X^\top\X\hb_1\right]\\
    &=\E\left[(\bmu+\bxi)^\top\Wb\sum_{i\in[n]}(\bmu+\bxi_i)(\bmu+\bxi_i)^\top\hb_1\right]\hspace*{300pt}\\
        &=\E\left[\bmu^\top\Wb\sum_{i\in[n]}(\bmu\bmu^\top+\bxi_i\bxi_i^\top)\hb_1\right]\\
        &=0,
\end{align*}
\begin{align*}
    \quad(c):\quad&\E\left[(\x^\top\Wt\sum_{i\in\Ic}\x_i)(mh\x^\top \w_1)\right]\\
    &=mh\E\left[(\bmu+\bxi)^\top\Wt\sum_{i\in\Ic}(\bmu+\bxi_i)(\bmu+\bxi)^\top\w_1\right]\hspace*{300pt}\\
    &=mh\sum_{i\in\Ic}\E\left[(\bmu+\bxi)^\top\Wt\bmu(\bmu+\bxi)^\top\w_1\right]\hspace*{300pt}\\
&=mh\sum_{i\in\Ic}\E\left[\bmu^\top\Wt\bmu\bmu^\top\w_1+\bxi^\top\Wt\bmu\bxi^\top\w_1\right]\\
&=0,
\end{align*}
\begin{align*}
    \quad(d):\quad&\E\left[mh\x^\top \w_1\right]=0.\hspace*{500pt}\\
\end{align*}
Therefore, loss in \eqref{app loss 1} returns
\begin{align*}
\Lc_{\att\mbar1}(\Wc^{(1)},\hb)&=\underset{\Lct(\Wt)}{\underbrace{\E\left[\left(\x^\top\Wt\sum_{i\in\Ic}\x_i-1\right)^2\right]}}+\E\left[\left(\x^\top(\Wb\X^\top\X\hb_1+mh\w_1)\right)^2\right].
\end{align*}
Here, the first term $\E[(\x^\top\Wt\sum_{i\in\Ic}\x_i-1)^2]=\Lct(\Wt)$ where $\Lct(\Wt)$ is defined in \eqref{app W loss}.

Recall that $\Wt=h\Wb+\w_1\hb_1^\top$. Then for any $\Wt\in\R^{d\times d}$, setting $\hb_1=\w_1=\zerbb_d$ and $h=1$ returns
$\E\left[\left(\x^\top\left(\Wb\X^\top\X\hb_1+mh\w_1\right)\right)^2\right]=0$, and then
\[
\Lc_{\att\mbar1}(\Wc^{(1)},\hb)=\E\left[\left(\x^\top\Wb\sum_{i\in\Ic}\x_i-1\right)^2\right]
\]
Therefore, optimizing $\Lc_{\att\mbar1}(\Wc^{(1)},\hb)$ returns the same minima as optimizing $\Lct(\W)$, which completes the proof of \eqref{app equal loss}.
Note that optimal loss $\Lc_{\att\mbar1}^\st$ depends on the labeled data $i\in\Ic$ only.

Furthermore, since $\Lct(\W)$ is strongly convex (see \eqref{app W loss expand}), $\W^\st$ exists and is unique. Therefore, \eqref{app equal loss} and uniqueness of $\W^\st$ leads to the conclusion \eqref{app equal pred}.
\end{proof}

\begin{lemma}\label{lemma W=I}
Consider the objective defined in \eqref{app W loss} with semi-supervised data following Section~\ref{sec setup}. Then the optimal solution $\W^\st$ satisfies
\[
\W^\st=c\Iden
\]
for some $c>0$.
\end{lemma}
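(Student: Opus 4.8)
The plan is to exploit the rotational symmetry of the data distribution together with strong convexity. First I would recall the reduced form of the objective derived in the proof of Lemma~\ref{lemma reduce}: absorbing the true labels into the features by setting $\tilde\x=y\x$ and $\tilde\x_i=y_i\x_i$, so that, conditioned on $\bmu$, both $\tilde\x$ and $\{\tilde\x_i\}_{i\in\Ic}$ are i.i.d.\ $\Nc(\bmu,\sigma^2\Iden)$ and
\[
\Lct(\W)=\E\left[\left(\tilde\x^\top\W\textstyle\sum_{i\in\Ic}\tilde\x_i-1\right)^2\right].
\]

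The key observation is that, because $\bmu\sim\text{Unif}(\mathbb{S}^{d-1})$ and the noise is isotropic Gaussian, the joint law of $(\tilde\x,\{\tilde\x_i\})$ is invariant under a common orthogonal rotation: for every $\Qb\in O(d)$, the tuple $(\Qb\tilde\x,\{\Qb\tilde\x_i\})$ has the same distribution, since $\Qb\bmu$ is again uniform on the sphere and $\Qb\bxi\sim\Nc(0,\sigma^2\Iden)$. Performing this change of variables inside the expectation and using $(\Qb\tilde\x)^\top\W(\Qb\tilde\x_i)=\tilde\x^\top(\Qb^\top\W\Qb)\tilde\x_i$ yields the identity $\Lct(\W)=\Lct(\Qb^\top\W\Qb)$ for all $\Qb\in O(d)$.

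Next I would invoke strong convexity of $\Lct$ (cf.\ the Hessian computation in~\eqref{app W loss expand}), which guarantees a unique minimizer $\W^\st$. Since $\Qb^\top\W^\st\Qb$ attains the same minimal loss for every $\Qb$, uniqueness forces $\Qb^\top\W^\st\Qb=\W^\st$, equivalently $\W^\st\Qb=\Qb\W^\st$ for all orthogonal $\Qb$. A matrix commuting with the entire orthogonal group must be a scalar multiple of the identity — this is the standard fact that the defining representation of $O(d)$ on $\R^d$ is irreducible (one may also derive it directly by testing the commutation relation against coordinate permutations, which forces equal diagonal and equal off-diagonal entries, and then against a single planar rotation, which kills the off-diagonal part). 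Hence $\W^\st=c\Iden$ for some $c\in\R$.

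Finally, to fix the sign of $c$, I would minimize the one-dimensional restriction $\Lct(c\Iden)=c^2\E[S^2]-2c\E[S]+1$ with $S:=\tilde\x^\top\sum_{i\in\Ic}\tilde\x_i$, which gives $c=\E[S]/\E[S^2]$. By independence, $\E[\tilde\x^\top\tilde\x_i\mid\bmu]=\|\bmu\|^2=1$, so $\E[S]=|\Ic|>0$, while $\E[S^2]\ge(\E[S])^2>0$; therefore $c>0$. The main obstacle is the passage from symmetry of the \emph{loss} to the scalar form of the \emph{minimizer}, which hinges on uniqueness via strong convexity; both the rotational-invariance computation and the final scalar optimization are routine.
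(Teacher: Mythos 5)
Your proof is correct, but it takes a genuinely different route from the paper's. The paper proceeds by brute force: after the same label-absorbing reduction, it expands $\Lct(\W)$ in closed form using fourth-moment identities for $\bmu\sim\text{Unif}(\mathbb{S}^{d-1})$ (yielding the trace expression in \eqref{app W loss expand}), differentiates, sets the gradient to zero, and solves the resulting linear matrix equation explicitly, which produces the exact constant $c=\big((1+\sigma^2)\E[m^2]/\E[m]+\sigma^2+\sigma^4 d\big)^{-1}$. You instead argue by symmetry: the joint law of the (label-absorbed) data is invariant under a common rotation $\Qb\in O(d)$, so $\Lct(\W)=\Lct(\Qb^\top\W\Qb)$; strong convexity gives a unique minimizer, which therefore commutes with all of $O(d)$ and must be a scalar multiple of $\Iden$; a one-dimensional optimization then fixes the sign. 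Your approach is cleaner, avoids the sphere moment computations entirely, and generalizes verbatim to any rotationally invariant task prior and noise; what it gives up is the explicit value of $c$, which the paper's calculation delivers for free (though only the sign of $c$ is ever used downstream). Two small caveats: you still cite the paper's expansion \eqref{app W loss expand} to justify strong convexity, so you have not fully escaped that computation — if you want a self-contained argument, note that the Hessian is $2\E[\vb\vb^\top]$ with $\vb=\text{vec}\bigl(\tilde\x(\sum_{i\in\Ic}\tilde\x_i)^\top\bigr)$, which is positive definite for $\sigma>0$, $p>0$ since no nonzero $\W$ annihilates $\tilde\x^\top\W\sum_i\tilde\x_i$ almost surely. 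Also, $\E[S]$ equals $\E[|\Ic|]=np$ rather than $|\Ic|$, since $|\Ic|$ is random; this does not affect positivity.
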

\begin{proof}
    Recap the Objective~\eqref{app W loss} and its optimal solution $\W^\st$. Let $\Ic$ be the index set corresponding the labeled in-context examples, and $|\Ic|=m$. Note that, $m$ is also a random variable, independent of $\x_i,y_i^c,\x,y$.

    
    As in the proof of Lemma~\ref{lemma reduce}, we use $\x$ to represent $y\x$ and $\x_i$ to represent $y_i^c\x_i$ for simplicity, where (updated) $\x_i,\x\sim\Nc(\bmu,\sigma^2\Iden)$. Letting $\bxi',\bxi,\bxi_i\sim\Nc(0,\sigma^2\Iden)$ be independent, we obtain
    \begin{align}
        \Lct(\W)&=\E\left[(\x^\top\W\sum_{i\in\Ic} \x_i-1)^2\right]\label{app W loss expand}\\
        &=\E\left[((\bmu+\bxi)^\top\W\sum_{i\in\Ic} (\bmu+\bxi_i)-1)^2\right]\nn\\
        &=\E\left[((\bmu+\bxi)^\top\W (m\bmu+\sqrt{m}\bxi')-1)^2\right]\nn\\
        &=\E\left[m^2(\bmu^\top\W\bmu)^2+m(\bmu^\top\W\bxi')^2+m^2(\bxi^\top\W\bmu)^2+m(\bxi^\top\W\bxi')^2+1\right]-2\E\left[m\bmu^\top\W\bmu\right]\nn\\
        &=\frac{\E[m^2]}{d(d+2)}(\tr{\W}^2+\tr{\W\W^\top}+\tr{\W^2})+\frac{\E[m+m^2]}{d}\sigma^2\tr{\W\W^\top}\nn\\
        &\quad+\E[m]\sigma^4\tr{\W\W^\top}+1-\frac{2\E[m]}{d}\tr{\W}.\nn
    \end{align}
    Differentiating it results in
    \begin{align*}
        \nabla_{\W}\Lct(\W)=\frac{2\E[m^2]}{d(d+2)}(\tr{\W}\Iden+\W+\W^\top)+\frac{2\E[m+m^2]\sigma^2}{d}\W+2\E[m]\sigma^4\W-\frac{2\E[m]}{d}\Iden.
    \end{align*}
    Setting $\nabla_{\W}\Lct(\W)=0$, we obtain the optimal $\W^\st$
    \[
    \W^\st=\frac{1}{(1+\sigma^2)\E[m^2]/\E[m]+\sigma^2+\sigma^4d}\Iden,
    \]
    which leads to the conclusion  that $\W^\st=c\Iden$, for  $c=\frac{1}{(1+\sigma^2)\E[m^2]/\E[m]+\sigma^2+\sigma^4d}>0$. It completes the proof.
\end{proof}
\subsection{Proof of Theorem~\ref{thm one layer}}\label{app thm one layer}
\begin{proof}
Note that \eqref{1 layer pred} can be easily proven using Lemmas~\ref{lemma reduce} and \ref{lemma W=I}. Then, we focus on proving \eqref{one layer err}. 


    
    Given that \eqref{1 layer pred} holds, we can rewrite its classification error as
    \begin{align}
    \P( {y}_{\att\text{-}1}^\st(\Z)\neq y)
    =\P(\sgn{\x^\top\hat\bmu_s}\neq y)=\P(\sgn{y\x^\top\hat\bmu_s}\neq 1)\label{app err}
    \end{align}
    where $\hat\bmu_s=\frac{1}{|\Ic|}\sum_{i\in\Ic}y_i\x_i$ defined in \eqref{pred spi} and $\Ic$ is the index set of labeled samples. Let $m=|\Ic|$. 
    
    Recall from Section~\ref{sec data} where $\x\sim\Nc(y\cdot\bmu,\sigma^2\Iden)$. We can rewrite
    \begin{align*}
    &y\x=\bmu+\sigma \g_1
    \quad\text{where}\quad\g_1\sim\Nc(0,\Iden).
    \end{align*}
    Then for any given $\bmu,\hat\bmu_s$, we get
    \begin{align}
    \P\left(\sgn{y\x^\top\hat\bmu_s}\neq 1~\big|~\bmu,\hat\bmu_s\right)&=\P\left(\left(\bmu+\sigma \g_1\right)^\top\hat\bmu_s<0~\big|~\bmu,\hat\bmu_s\right)\nn\\
    &=\P\left(\bmu^\top\hat\bmu_s<\sigma \g_1^\top\hat\bmu_s~\big|~\bmu,\hat\bmu_s\right)\nn\\
    &=Q\left(\frac{\bmu^\top\hat\bmu_s}{\sigma\tn{\hat\bmu_s}}\right).\label{app err given mu hat}
    \end{align}
    Here $Q$-function is the tail distribution function of the standard normal distribution. 
    

    Next, similarly, given that $\x_i\sim\Nc(y_i\cdot\bmu,\sigma^2\Iden)$ for $i\in\Ic$, we can rewrite
    \[
    \hat\bmu_s=\frac{1}{m}\sum_{i\in\Ic}y_i\x_i=\bmu+\frac{\sigma}{\sqrt{m}}\g_2\quad\text{where}\quad\g_2\sim\Nc(0,\Iden).
    \]
    Then combining \eqref{app err} and \eqref{app err given mu hat}, we have
    \begin{align*}
        \P( {y}_{\att\text{-}1}^\st(\Z)\neq y)&=\E_{\bmu,\g_2}\left[Q\left(\frac{\bmu^\top\hat\bmu_s}{\sigma\tn{\hat\bmu_s}}\right)\right]\\
        &=\E_{\bmu,\g_2}\left[Q\left(\frac{\bmu^\top(\bmu+\frac{\sigma}{\sqrt{m}}\g_2)}{\sigma\tn{\bmu+\frac{\sigma}{\sqrt{m}}\g_2}}\right)\right]\\
        &=\E_{\bmu,\g_2}\left[Q\left(\frac{1+\frac{\sigma}{\sqrt{m}}\bmu^\top\g_2}{\sigma\sqrt{1+2\frac{\sigma}{\sqrt{m}}\bmu^\top\g_2+\frac{\sigma^2}{m}\tn{\g_2}^2}}\right)\right].
    \end{align*}
    Note that for any $\bmu$ with $\tn{\bmu}=1$, we have $\bmu^\top\g_2\sim\Nc(0,1)$. 
    Therefore, we can write 
    \[
    \bmu^\top\g_2=g\quad\text{where}\quad g\sim\Nc(0,1),
    \]
    and let $\Ub\in\R^{d\times d}$ be a unitary matrix with first row being $\bmu$. We can write
    \[
    \tn{\g_2}^2=\tn{\Ub\g_2}^2=g^2+h\quad\text{where}\quad h\sim\Xc^2_{d-1}.
    \]
    Here, $\Xc^2_{d-1}$ denotes chi-squared distribution with $(d-1)$ degrees of freedom. Then, we get 
    \begin{align}
        \P( {y}_{\att\text{-}1}^\st(\Z)\neq y)
        &=\E_{g,h}\left[Q\left(\frac{1+\frac{\sigma}{\sqrt{m}}g}{\sigma\sqrt{1+2\frac{\sigma}{\sqrt{m}}g+\frac{\sigma^2}{m}(g^2+h)}}\right)\right]\nn\\
        &=\E_{g,h}\left[Q\left(\frac{1+\frac{\sigma}{\sqrt{m}}g}{\sigma\sqrt{(1+\frac{\sigma}{\sqrt{m}}g)^2+\frac{\sigma^2}{m}h}}\right)\right],\nn\\
        &=\E_{g,h}\left[Q\left(\frac{1+\eps_\sigma g}{\sigma\sqrt{(1+\eps_\sigma g)^2+\eps^2_\sigma h}}\right)\right],\nn
    \end{align}
    where $\eps_\sigma:=\sigma/\sqrt{m}$. It completes the proof of \eqref{one layer err}. 

Next, we derive an upper bound for $\P( {y}_{\att\text{-}1}^\st(\Z)\neq y)$. 
Let $c:=\eps_\sigma^{-1}$. Then we have 
\begin{align}
     \P( {y}_{\att\text{-}1}^\st(\Z)\neq y)&=\E_{g,h}\left[Q\left(\frac{c+ g}{\sigma\sqrt{(c+g)^2+h}}\right)\right]\nn\\
     &=\E_{g\geq-\frac{c}{2},h}\left[Q\left(\frac{c+ g}{\sigma\sqrt{(c+g)^2+h}}\right)\right]+\E_{g<-\frac{c}{2},h}\left[Q\left(\frac{c+ g}{\sigma\sqrt{(c+g)^2+h}}\right)\right]\nn\\
     &\leq\E_{g\geq-\frac{c}{2},h}\left[Q\left(\frac{c+ g}{\sigma\sqrt{(c+g)^2+h}}\right)\right]+Q(c/2)\nn\\
     &=\E_{g\geq-\frac{c}{2},h}\left[Q\left(\frac{1}{\sigma\sqrt{1+h/(c+g)^2}}\right)\right]+Q(c/2),\label{app err upper bound}
\end{align}
where the inequality comes from the fact that $\P(g\leq-c/2)=Q(c/2)$ and $Q(x)\leq 1$ for any $x\in\R$. Next, we have
\begin{align*}
    \frac{1}{\sqrt{1+h/(c+g)^2}}\geq1-\frac{1}{2}\frac{h}{(c+g)^2}\geq1-\frac{2h}{c^2}.
\end{align*}
Here the first inequality comes from that $\frac{1}{\sqrt{1+x}}\geq1-\frac{1}{2}x$ and the second utilizes that $g\geq -\frac{c}{2}$.

    
    Since $h\sim\Xc^2_{d-1}$, from the Laurent-Massart inequality \citep{laurent2000adaptive}, we have that
    \begin{align*}
        \P\left(h\geq d-1+2\sqrt{(d-1)t_1}+2t_1\right)\leq e^{-t_1}.
    \end{align*}
    Therefore, we have that with probability at least $1-e^{-t_1}$
    \[
    \frac{1}{\sqrt{1+h/(c+g)^2}}\geq1-\frac{2(d-1+2\sqrt{(d-1)t_1}+2t_1)}{c^2}.
    \]
    Setting $t_1=d$, we get with probability at least $1-e^{-d}$
    \[
    \frac{1}{\sqrt{1+h/(c+g)^2}}\geq1-\frac{10d}{c^2}.
    \]
    Combining the result with \eqref{app err upper bound}, since $Q(x)\leq 1$ for $x\in\R$ and $Q(x)\leq e^{-x^2/2}$ for $x>1$, we get that
    \begin{align*}
    \P( {y}_{\att\text{-}1}^\st(\Z)\neq y)&\leq e^{-d}+Q(c/2)+Q\left(\frac{1}{\sigma}\left(1-\frac{10d}{c^2}\right)\right)\\
    &\leq e^{-d}+e^{-1/8\eps_\sigma^2}+Q\left(\frac{1}{\sigma}\left(1-10d\eps_\sigma^2\right)\right).
    \end{align*}
    It completes the proof. 

\end{proof}
\section{Analysis of Multi-layer Linear Attention}
\subsection{Proof of Proposition~\ref{prop multilayer}}\label{app sec prop multi}

\begin{proof}
We consider the following model constructions for the attention matrices in the $\ell$th layer, $\ell\in[L]$ and the final linear prediction head: 
\begin{equation}\label{app multi cons}
\begin{split}
    \text{$\ell$th layer:}&\quad\W_{q\ell}\W_{k\ell}^\top=\begin{bmatrix}
    \Iden_d&0\\
    0&0
\end{bmatrix}\quad\text{and}\quad\W_{v\ell}=\begin{bmatrix}
    a_{\ell}\Iden_{d}&0\\
    0&b_{\ell}
\end{bmatrix};\\
\text{Prediction head:}&\quad\hb=\begin{bmatrix}
    \zerbb_d\\c
\end{bmatrix}.
\end{split}
\end{equation}

Suppose the input to $\ell$th layer is
\begin{align*}
    \Z_\ell=\begin{bmatrix}
    \X_\ell&\y_\ell\\
    \x_\ell^\top&y_\ell
\end{bmatrix}\in\R^{(n+1)\times(d+1)}\quad\text{where}\quad\Z_1=\Z=\begin{bmatrix}
    \X&\y\\
    \x^\top&0
\end{bmatrix}.
\end{align*}
Recapping the model construction from \eqref{app multi cons}, the $\ell$th layer output returns
\begin{align}
    \left(\Z_\ell\W_{q\ell}\W_{k\ell}^\top\Z_\ell^\top\M\right)\Z_\ell\W_{v\ell}&=\begin{bmatrix}
    \X_\ell&\y_\ell\\
    \x^\top_\ell&y_\ell
\end{bmatrix}\begin{bmatrix}
    \Iden_d&0\\0&0
\end{bmatrix}\begin{bmatrix}
    \X_\ell^\top&\x_\ell\\
    \y^\top_\ell&y_\ell
\end{bmatrix}\M\begin{bmatrix}
    \X_\ell&\y_\ell\\
    \x_\ell^\top&y_\ell
\end{bmatrix}\begin{bmatrix}
    a_{\ell}\Iden_d&0\\0&b_{\ell}
\end{bmatrix}\nn\\
    &=\begin{bmatrix}
    \X_\ell\X_\ell^\top&\X_\ell\x_\ell\\
    \x_\ell^\top\X_\ell^\top&\x_\ell^\top\x_\ell
\end{bmatrix}\begin{bmatrix}
    \Iden_n&0\\0&0
\end{bmatrix}\begin{bmatrix}
    a_{\ell}\X_{\ell}&b_\ell\y_\ell\\
    a_{\ell}\x_\ell^\top&b_\ell y_\ell
\end{bmatrix}\nn\\
&=\begin{bmatrix}
    a_\ell\X_\ell\X_\ell^\top\X_\ell&b_\ell\X_\ell\X_\ell^\top\y_\ell\\
    a_\ell\x_\ell^\top\X_\ell^\top\X_\ell&b_\ell\x_\ell^\top\X_\ell^\top\y_\ell
\end{bmatrix}.\label{app ell layer output}
\end{align}
Therefore, following \eqref{def input ell}, the input of $(\ell+1)$th layer is
\begin{align}
\Z_{\ell+1}&=\Z_\ell+\begin{bmatrix}
    a_\ell\X_\ell\X_\ell^\top\X_\ell&b_\ell\X_\ell\X_\ell^\top\y_\ell\\
    a_\ell\x_\ell^\top\X_\ell^\top\X_\ell&b_\ell\x_\ell^\top\X_\ell^\top\y_\ell
\end{bmatrix}\nn\\
&=\begin{bmatrix}
    \X_\ell+a_\ell\X_\ell\X_\ell^\top\X_\ell&\y_\ell+b_\ell\X_\ell\X_\ell^\top\y_\ell\\
    \x_\ell^\top+a_\ell\x_\ell^\top\X_\ell^\top\X_\ell&y_\ell+b_\ell\x_\ell^\top\X_\ell^\top\y_\ell
\end{bmatrix}\in\R^{(n+1)\times(d+1)}.\label{app ell+1 input}
\end{align}

\paragraph*{$\bullet$ Label propagation:} 
We first focus on deriving label propagation results. Suppose that we have
\[
a_\ell=0\quad\text{for}\quad \ell\in[L].
\]
Then following \eqref{app ell layer output}, the output of $\ell$'th layer takes the following form:
\begin{align}
    \left(\Z_\ell\W_{q\ell}\W_{k\ell}^\top\Z_\ell^\top\M\right)\Z_\ell\W_{v\ell}=\begin{bmatrix}
    0&b_\ell\X_\ell\X_\ell^\top\y_\ell\\
    0&b_\ell\x_\ell^\top\X_\ell^\top\y_\ell
\end{bmatrix}.\nn
\end{align}
Here, the first $d$ coordinates of each token's output are zeros, and therefore, the corresponding input coordinates remain unchanged, and we have 
\[
\X_\ell\equiv\X\quad\text{and}\quad\x_\ell\equiv\x\quad\text{for}\quad \ell\in[L].
\]
The prediction (based on the last token output and after applying prediction head) is given by
\begin{align}\label{app label prop f}
f_{\all\mbar L}(\Z)=cb_L\x^\top\X^\top\y_L.
\end{align}
We next focus on obtaining $\y_L$. From \eqref{app ell+1 input}, we have
\begin{align*}
    \y_{\ell+1}=\y_\ell+b_\ell\X\X^\top\y_\ell=(\Iden+b_\ell\X\X^\top)\y_\ell.
\end{align*}
Therefore,
\begin{align*}
    \y_{L}&=\prod_{\ell=1}^{L-1}(\Iden+b_\ell\X\X^\top)\y.
\end{align*}
Combining with \eqref{app label prop f} results in
\[
f_{\all\mbar L}(\Z)=cb_L\x^\top\X^\top\prod_{\ell=1}^{L-1}(\Iden+b_\ell\X\X^\top)\y=cb_L\x^\top\prod_{\ell=1}^{L-1}(\Iden+b_\ell\X^\top\X)\X^\top\y.
\]
It completes the proof.

\paragraph*{$\bullet$ Feature propagation:} We now focus on the feature propagation setting. In contrast to the label propagation, let us assume that
\begin{align*}
&a_\ell\to\infty\quad\text{and}\quad b_\ell\to0^+\quad\text{for}\quad\ell\in[L].
\end{align*}
The prediction (following \eqref{app ell layer output}, based on the last token output and after applying prediction head) is given by
\begin{align}\label{app feature prop f}
f_{\all\mbar L}(\Z)=cb_L\x_L^\top\X_L^\top\y_L.
\end{align}
We first obtain $\y_L$. From \eqref{app ell+1 input} (since $b_{\ell}\to 0$), we have
\[
\y_{\ell+1}=\y_\ell+b_\ell\X\X^\top\y_\ell=\y_\ell.
\]
Therefore, 
\begin{align*}
    \y_{\ell}\equiv\y\quad\text{for}\quad\ell\in[L].
\end{align*}
Next, we focus on $\X_L,\x_L$. From \eqref{app ell+1 input}, as $a_\ell\to\infty$, we have
\begin{align*}
    &\X_{\ell+1}=\X_\ell+a_\ell\X_\ell\X_\ell^\top\X_\ell=\X_\ell(\Iden+a_\ell\X_\ell^\top\X_\ell)= a_\ell\X_\ell\X_\ell^\top\X_\ell;\\
    &\x^\top_{\ell+1}=\x^\top_\ell+a_\ell\x_\ell^\top\X_\ell^\top\X_\ell=\x_\ell^\top(\Iden+a_\ell\X_\ell^\top\X_\ell)= a_\ell\x_\ell^\top\X_\ell^\top\X_\ell.
\end{align*}
Therefore,
\begin{align*}
    \X_{L}&=a_{L-1}\X_{L-1}(\X_{L-1}^\top\X_{L-1})\\
    &=a_{L-1}a_{L-2}^3\X_{L-2}(\X_{L-2}^\top\X_{L-2})^{\frac{3^2-1}{2}}\\
    &=a_{L-1}a_{L-2}^3a_{L-3}^{3^2}\X_{L-3}(\X_{L-3}^\top\X_{L-3})^{\frac{3^3-1}{2}}\\
    &=\cdots\\
    &=a_{L-1}a_{L-2}^3a_{L-3}^{3^2}...a_1^{3^{L-2}}\X(\X^\top\X)^{\frac{3^{L-1}-1}{2}},
\end{align*}
and 
\begin{align*}
    \x_{L}^\top&=a_{L-1}\x_{L-1}^\top(\X_{L-1}^\top\X_{L-1})\\
    &=a_{L-1}a_{L-2}^3\x_{L-2}^\top(\X_{L-2}^\top\X_{L-2})^{\frac{3^2-1}{2}}\\
    &=a_{L-1}a_{L-2}^3a_{L-3}^{3^2}\x_{L-3}^\top(\X_{L-3}^\top\X_{L-3})^{\frac{3^3-1}{2}}\\
    &=\cdots\\
    &=a_{L-1}a_{L-2}^3a_{L-3}^{3^2}...a_1^{3^{L-2}}\x^\top(\X^\top\X)^{\frac{3^{L-1}-1}{2}}.
\end{align*}
Combining all together with \eqref{app feature prop f}, we have that
\begin{align*}
    f_{\all\mbar L}(\Z)&=cb_L\x_L^\top\X_L^\top\y_L\\
    &=cb_L\left(\prod_{\ell=1}^{L-1}a_{\ell}^{3^{L-1-\ell}}\right)^2\x^\top(\X^\top\X)^{3^{L-1}-1}\X^\top\y.
\end{align*}
It completes the proof.
\end{proof}
\subsection{Proof of Proposition~\ref{prop looped}}\label{app sec prop looped}
\begin{proof}
    The proof follows directly by adopting the same model construction and proof strategy as in Proposition~\ref{prop multilayer}, under the additional assumption that
    \[
    a_\ell=a\quad\text{and}\quad b_\ell=b\quad\text{for}\quad\ell\in[L].
    \]
\end{proof}

\subsection{Proof of Lemma~\ref{lemma label+feature}}
\begin{proof}
    In the proof of Proposition~\ref{prop multilayer}, we showed how to derive the label and feature propagation results by restricting the construction to either $a_\ell \equiv 0$ (for label propagation) or $(a_\ell \to \infty,b_\ell \to 0)$ (for feature propagation). Here, we consider a propagation process without imposing restrictions on the choices of $(a_\ell, b_\ell)$, and study the form of the final prediction returned by the model.

    To avoid the notation conflict, we express the matrix $\A$ in \eqref{A highest degree} as
    \[
    \A=\sum_{k=0}^{K}e_k(\X^\top\X)^k
    \]
    and let $\eb=[e_0~e_2~\cdots~e_{(3^L-3)/2}]^\top\in\R^{K+1}$.

    Recall the same model construction used in the proof of Proposition~\ref{prop multilayer}, defined in \eqref{app multi cons}. 
    From \eqref{app ell layer output}, we have that
    \begin{align*}
        f_{\att\mbar L}(\Z)&=cb_L\x_L^\top\X_L^\top\y_L
    \end{align*}
    where following \eqref{app ell+1 input}, we have
    \begin{align*}
        \X_{\ell+1}&=\X_\ell(\Iden+a_\ell\X_\ell^\top\X_\ell),\\
        \x_{\ell+1}^\top&=\x_\ell^\top(\Iden+a_\ell\X_\ell^\top\X_\ell),\\
        \y_{\ell+1}&=(\Iden+b_\ell\X_\ell\X_\ell^\top)\y_{\ell}.
    \end{align*}
    At each layer, the operations performed are linear combinations and multiplications involving $\X_\ell^\top\X_\ell$ and identity matrices scaled by the parameters $(a_\ell,b_\ell)$. Thus, each coefficient $e_k$ of $(\X^\top\X)^k$
  depends smoothly on the scalar parameters $(a_\ell,b_\ell)$.

    From \eqref{app ell layer output} and \eqref{app ell+1 input}, we have that
    \begin{align}
        f_{\att\mbar L}(\Z)&=cb_L\x_L^\top\X_L^\top\y_L\label{app L layer f}\\
        &=cb_L\cdot\x_{L-1}^\top(\Iden+a_{L-1}\X_{L-1}^\top\X_{L-1})^2(\Iden+b_{L-1}\X_{L-1}^\top\X_{L-1})\X_{L-1}^\top\y_{L-1}\nn\\
        &=\cdots\nn
    \end{align}
    That is, in the final $f_{\att\mbar L}(\Z)$ expression, the coefficients corresponding to different degrees of $(\X^\top \X)^k$ depend on the model parameters $cb_L$ and $(a_\ell, b_\ell)_{\ell=1}^{L-1}$, which together have at most $2L - 1$ degrees of freedom.
    Let $\cb=[cb_L~a_1~\cdots~a_{L-1}~b_1~\cdots~b_{L-1}]^\top$. 
    This means there exists a smooth function $g:\R^{2L-1}\to\R^{K}$ such that: $\eb=g(\cb)$.

    It remains to show that an $L$-layer linear attention model can produce terms involving powers of $\X^\top \X$ up to degree $(3^L - 3)/2$. 

    Let $f(\Z)$ be a function that contains terms of the form $\x^\top (\X^\top \X)^k \X^\top \y$ for various powers $k$. Define $\Pc(f(\Z))$ as the projection that extracts the highest degree $k$ present in $f(\Z)$. For example, $\Pc\big(\x^\top(\Iden + (\X^\top \X)^2)\X^\top \y\big) = 2$. Then from \eqref{app L layer f}, we have
    \begin{align*}
        \Pc(f_{\att\mbar L}(\Z))&=\Pc(\x_L^\top\X_L^\top\y_L)\\
        &=\Pc(\x_{L-1}^\top(\X_{L-1}^\top\X_{L-1})^3\X^\top_{L-1}\y_{L-1})\\
        &=\Pc(\x_{L-2}^\top(\X_{L-2}^\top\X_{L-2})(\X_{L-2}^\top\X_{L-2})^{3^2}(\X_{L-2}^\top\X_{L-2})^2\X^\top_{L-2}\y_{L-2})\\
        &=\Pc(\x_{L-2}^\top(\X_{L-2}^\top\X_{L-2})^{3^2+3}\X^\top_{L-2}\y_{L-2})\\
        &=\Pc(\x_{L-3}^\top(\X_{L-3}^\top\X_{L-3})(\X_{L-3}^\top\X_{L-3})^{3^3+3^2}(\X_{L-3}^\top\X_{L-3})^2\X^\top_{L-3}\y_{L-3})\\
        &=\Pc(\x_{L-3}^\top(\X_{L-3}^\top\X_{L-3})^{3^3+3^2+3}\X^\top_{L-3}\y_{L-3})\\
        &=\dots\\
        &=\Pc(\x^\top(\X^\top\X)^{3^{L-1}+\dots+3^2+3}\X^\top\y)\\
        &=3^{L-1}+\dots+3^2+3=\frac{3^L-3}{2}.
    \end{align*}
    It completes the proof.
 
\end{proof}

\subsection{Proof of Theorem~\ref{thm optimal A}}
\begin{proof}
    Let $\bxi\sim\Nc(0,\Iden)$ and rewrite $y\x=\bmu+\sigma\bxi$. For any matrix $\A\in\R^{d\times d}$, the prediction error of $\hat y_{\A}=\sgn{\x^\top\A\hat\bmu_s}$ given $\hat\bmu_s$ returns
    \begin{align}
        \P(\hat\y_{\A}\neq y~\big|~\hat\bmu_s)&=\P(y\x^\top\A\hat\bmu_s<0~\big|~\hat\bmu_s)\nn\\
        &=\P((\bmu+\sigma\bxi)^\top\A\hat\bmu_s<0~\big|~\hat\bmu_s)\nn\\
        &=Q\left(\frac{\bmu^\top\A\hat\bmu_s}{\sigma\tn{\A\hat\bmu_s}}\right).\label{app optim A conditioned err}
    \end{align}
    For any $\A\in\R^{d\times d}$, we can decompose it as 
    \[
    \A=\sum_{i=1}^{d}\lambda_i\ub_i\vb_i^\top
    \]
    where $\ub_1=\bmu$, $\tn{\ub_i}=1$ and $\ub_i^\top\ub_j=0$ for any $i\neq j$. Let $\lambda_1>0$.
    Then, we get
    \begin{align}
    \bmu^\top\A\hat\bmu_s&=\bmu^\top(\sum_{i=1}^d\lambda_i\ub_i\vb_i^\top)\hat\bmu_s\nn\\
        &=\sum_{i=1}^d\lambda_i\bmu^\top\ub_i\vb_i^\top\hat\bmu_s\nn\\
        &=\lambda_1\bmu^\top\ub_1\vb_1^\top\hat\bmu_s\nn\\
        &=\lambda_1\vb_1^\top\hat\bmu_s.\label{app optim A top}
    \end{align}
    Now consider $\tn{\A\hat\bmu_s}$ where we have
    \begin{align*}
        \A\hat\bmu_s&=\sum_{i=1}^d\lambda_i\ub_i\vb_i^\top\hat\bmu_s\\
        &=\lambda_1\bmu\vb_1^\top\hat\bmu_s+\sum_{i=2}^d\lambda_i\ub_i\vb_i^\top\hat\bmu_s.
    \end{align*}
    Since $\ub_i$, $i\neq1$ is orthogonal to $\bmu$, $\lambda_1\bmu\vb_1^\top\hat\bmu_s$ is orthogonal to $\sum_{i=2}^d\lambda_i\ub_i\vb_i^\top\hat\bmu_s$. Therefore, given $\tn{\ub_i}=1$ for all $i\in[d]$, it obeys
    \begin{align}
        \tn{\A\hat\bmu_s}^2=\tn{\lambda_1\bmu\vb_1^\top\hat\bmu_s}^2+\sum_{i=2}^d\tn{\lambda_i\ub_i\vb_i^\top\hat\bmu_s}^2=(\lambda_1\vb_1^\top\hat\bmu_s)^2+\lambda_1^2\sum_{i=2}^d(\lambda_1^{-1}\lambda_i\vb_i^\top\hat\bmu_s)^2.\label{app optim A bottom}
    \end{align}
    For simplicity, define 
    \[
    \Delta(\hat\bmu_s)=\sum_{i=2}^d(\lambda_1^{-1}\lambda_i\vb_i^\top\hat\bmu_s)^2
    \]
    where $\Delta(\cdot)$ is a function of $\lambda_1$ and $(\lambda_i,\vb_i)$'s for $i\geq2$, and we have
    \[
    \Delta(\hat\bmu_s)\geq0\quad\text{and}\quad\Delta(-\hat\bmu_s)=\Delta(\hat\bmu_s).
    \]
    Recall that $\hat\bmu_s$ is the SPI estimator (cf.~\eqref{pred spi}). Let $|\Ic|=m$. We can write $\hat\bmu_s=\bmu+\bxi'/\sqrt{m}$ where $\bxi'\sim\Nc(0,\sigma^2\Iden)$. 
    
    
    Using \eqref{app optim A conditioned err}, \eqref{app optim A top} and \eqref{app optim A bottom}, the classification error becomes
    \begin{align*}
        \P(\hat\y_\A\neq y)&=\E_{\hat\bmu_s}\left[Q\left(\frac{\bmu^\top\A\hat\bmu_s}{\sigma\tn{\A\hat\bmu_s}}\right)\right]\\
        &=\E_{\hat\bmu_s}\left[Q\left(\frac{\vb_1^\top\hat\bmu_s}{\sigma\sqrt{(\vb_1^\top\hat\bmu_s)^2+\Delta(\hat\bmu_s)}}\right)\right]\\
        &=\E_{\vb_1^\top\hat\bmu_s<0}\left[Q\left(\frac{\vb_1^\top\hat\bmu_s}{\sigma\sqrt{(\vb_1^\top\hat\bmu_s)^2+\Delta(\hat\bmu_s)}}\right)\right]+\E_{\vb_1^\top\hat\bmu_s\geq 0}\left[Q\left(\frac{\vb_1^\top\hat\bmu_s}{\sigma\sqrt{(\vb_1^\top\hat\bmu_s)^2+\Delta(\hat\bmu_s)}}\right)\right].
    \end{align*}
    First, note that for any $x>0$, $Q(x)<0.5<Q(-x)$. Therefore, 
    the optimal choice of $\vb_1 \in \mathbb{R}^d$ that minimizes $\mathbb{P}(\hat{y}_\A \neq y)$ is contained within the set of $\vb_1$ values that maximize $\mathbb{P}(\vb_1^\top \hat{\bmu}_s > 0)$. Let $\vb_1^\st:=\arg\max_{\vb_1\in\R^d}\P(\vb_1^\top\hat\bmu_s>0)$. Given that $\hat\bmu_s\sim\Nc(\bmu,\sigma^2/m\Iden)$, we have that $\vb_1^\st=c\bmu$ for $c>0$. Let $c=1$ and therefore, $\vb_1^\st=\bmu$ without loss of generality (since $\lambda_1$ can be any positive scalar). 
    Then we obtain 
    \begin{align*}
        \min_{\A\in\R^{d\times d}}\P(\hat\y_\A\neq y)
        &=\min_{\Delta}\E_{\hat\bmu_s}\left[Q\left(\frac{\bmu^\top\hat\bmu_s}{\sigma\sqrt{(\bmu^\top\hat\bmu_s)^2+\Delta(\hat\bmu_s)}}\right)\right].
    \end{align*}
    Let $f(\hat\bmu_s)$ be the probability density function of $\hat\bmu_s$. Since $\hat\bmu_s\sim\Nc(\bmu,\sigma^2/m\Iden)$, then it satisfies
    \begin{align}
    f(\hat\bmu_s)\geq f(-\hat\bmu_s)\quad\text{for any }\bmu^\top\hat\bmu_s>0.\label{app optim A pdf}
    \end{align}
    Therefore, the classification error becomes
    \begin{align*}
        \P(\hat\y_\A\neq y~\big|~\vb_1=\bmu)
        &=\int_{\hat\bmu_s}f(\hat\bmu_s)Q\left(\frac{\bmu^\top\hat\bmu_s}{\sigma\sqrt{(\bmu^\top\hat\bmu_s)^2+\Delta(\hat\bmu_s)}}\right)d\hat\bmu_s\\
        &=\int_{\bmu^\top\hat\bmu_s>0}f(\hat\bmu_s)Q\left(\frac{\bmu^\top\hat\bmu_s}{\sigma\sqrt{(\bmu^\top\hat\bmu_s)^2+\Delta(\hat\bmu_s)}}\right)+f(-\hat\bmu_s)Q\left(\frac{-\bmu^\top\hat\bmu_s}{\sigma\sqrt{(\bmu^\top\hat\bmu_s)^2+\Delta(\hat\bmu_s)}}\right)d\hat\bmu_s\\
        &=\int_{\bmu^\top\hat\bmu_s>0}\left(f(\hat\bmu_s)-f(-\hat\bmu_s)\right)Q\left(\frac{\bmu^\top\hat\bmu_s}{\sigma\sqrt{(\bmu^\top\hat\bmu_s)^2+\Delta(\hat\bmu_s)}}\right)+f(-\hat\bmu_s)d\hat\bmu_s.
    \end{align*}
    Following \eqref{app optim A pdf}, to minimize the error, we need minimize $Q\left(\frac{\bmu^\top\hat\bmu_s}{\sigma\sqrt{(\bmu^\top\hat\bmu_s)^2+\Delta(\hat\bmu_s)}}\right)$ for $\bmu^\top\hat\bmu_s>0$, which can be easily done by choosing $\lambda_i=0$ for $i\geq2$. Then we get $\Delta(\hat\bmu_s)\equiv0$. Therefore, the optimal solution set $\Ac^\st$ defined in Theorem~\ref{thm optimal A} satisfies:
    \[
    \Ac^\st=\left\{\lambda_1\bmu\bmu^\top~\big|~\lambda_1>0\right\}.
    \]

    Combining all together, we obtain
    \begin{align*}
        \min_{\A\in\R^{d\times d}}\P(\hat\y_\A\neq y)&=\int_{\bmu^\top\hat\bmu_s>0}\left(f(\hat\bmu_s)-f(-\hat\bmu)\right)Q\left(\frac{1}{\sigma}\right)+f(-\hat\bmu_s)d\hat\bmu_s\\
        &=\int_{\bmu^\top\hat\bmu_s>0}f(\hat\bmu_s)d\hat\bmu_s\cdot Q\left(\frac{1}{\sigma}\right)+\int_{\bmu^\top\hat\bmu_s<0}f(\hat\bmu_s)d\hat\bmu_s\cdot\left(1- Q\left(\frac{1}{\sigma}\right)\right)\\
        &=Q\left(-\frac{\sqrt{m}}{\sigma}\right)Q\left(\frac{1}{\sigma}\right)+Q\left(\frac{\sqrt{m}}{\sigma}\right)\left(1- Q\left(\frac{1}{\sigma}\right)\right)\\
        &=\left(1-Q\left(\frac{\sqrt{m}}{\sigma}\right)\right)Q\left(\frac{1}{\sigma}\right)+Q\left(\frac{\sqrt{m}}{\sigma}\right)\left(1- Q\left(\frac{1}{\sigma}\right)\right)\\
        &=Q\left(\frac{1}{\sigma}\right)+Q\left(\frac{\sqrt{m}}{\sigma}\right)-2Q\left(\frac{\sqrt{m}}{\sigma}\right)Q\left(\frac{1}{\sigma}\right).
    \end{align*}
    It completes the proof.
\end{proof}


\subsection{Proof of Theorem~\ref{thm non asymp}}\label{app sec non asymp}
\begin{proof}
Recap from Proposition~\ref{prop multilayer}. For any $L$-layer attention model with $L\geq2$, it can output
\begin{align}
f_{\att\mbar L}(\Z)=\x^\top(\X^\top\X/n-\sigma^2\Iden)\hat\bmu_s.\label{app non asymp output}
\end{align}
Let 
\[
\hat y=\sgn{f_{\att\mbar L}(\Z)}
\]
with $f_{\att\mbar L}(\Z)$ defined in \eqref{app non asymp output}. Then we have
\[
\P(y_{\att\mbar L}^\st(\Z)\neq y)\leq\P(\hat y\neq y).
\]
Therefore, in the following, we focus on upper-bounding the classification error $\P(\hat y\neq y)$ corresponding to \eqref{app non asymp output}. Given that the optimal prediction under the form $\sgn{\x^\top\A\hat\bmu_s}$ is given by $\hat y_{\bmu\bmu^\top}:=\sgn{\x^\top\bmu\bmu^\top\hat\bmu_s}$ (cf.~Theorem \ref{thm optimal A}), with its corresponding error presented in \eqref{n infty err}. To analyze the performance of $\hat{y}$, we study its difference from the prediction $\hat y_{\bmu\bmu^\top}$.

To begin with, let $\g_i=\bxi_i/\sigma\sim\Nc(0,\Iden)$ and $\g=\sum_{i=1}^n{\bxi_i}/\sigma\sqrt{n}\sim\Nc(0,\Iden)$. For simplicity, let $\A:=\X^\top\X/n-\sigma^2\Iden$. 
We get
\begin{align*}
    \A&=\frac{1}{n}\X^\top\X-\sigma^2\Iden\\&=\frac{1}{n}\left(\sum_{i=1}^n\bmu\bmu^\top+\bmu\bxi_i^\top+\bxi_i\bmu^\top+\bxi_i\bxi_i^\top\right)-\sigma^2\Iden\\
    &=\bmu\bmu^\top+\frac{\sigma}{\sqrt{n}}(\bmu\g^\top+\g\bmu^\top)+\sigma^2\left(\frac{\sum_{i=1}^n\g_i\g_i^\top}{n}-\Iden\right).
\end{align*}
Recall \eqref{app optim A conditioned err} from the proof of Theorem~\ref{thm optimal A}. Our goal is to bound
\[
\P(\hat y\neq y)=\E_{\hat\bmu}\left[Q\left(\frac{\bmu^\top\A\hat\bmu_s}{\sigma\tn{\A\hat\bmu_s}}\right)\right].
\]

Define
\begin{align}
\Bal:=\A-\bmu\bmu^\top=\frac{\sigma}{\sqrt{n}}(\bmu\g^\top+\g\bmu^\top)+\sigma^2\left(\frac{\sum_{i=1}^n\g_i\g_i^\top}{n}-\Iden\right).\label{def Bal}
\end{align}
From the Laurent-Massart inequality \citep{laurent2000adaptive}, we have that with probability at least $1-e^{-t_1}$ (assuming $t_1\geq d$), the first term of \eqref{def Bal} can be bounded by
\begin{align}
    &\frac{1}{\sqrt{n}}\left\|\bmu\g^\top+\g\bmu^\top\right\|\leq\frac{2\|\g\|}{\sqrt{n}}\leq 6\sqrt{\frac{t_1}{n}}.\label{app non asymp bound 1}
\end{align}
Additionally, from \cite{neopane2019lecture}, we have that with probability at least $1-e^{-t_{2}}$ (assuming $t_2\geq d$), the second term of $\Bal$ (cf. \eqref{def Bal}) is bounded by (with a universal constant $C>0$)
\begin{align}
&\left\|\frac{\sum_{i=1}^n\g_i\g_i^\top}{n}-\Iden\right\|\leq C\cdot\sqrt{\frac{t_2}{n}}.\label{app non asymp bound 2}
\end{align}
Combining \eqref{app non asymp bound 1} and \eqref{app non asymp bound 2}, we get with probability at least $1-2e^{-t}$ (for $t\geq d$)
\[
\|\Bal\|\leq C_1\sqrt{\frac{t}{n}}\quad\text{where}\quad C_1:=6\sigma+C\sigma^2.
\]

We also bound $\|\hat\bmu_s\|$ as follows. Let $\hat\bmu_s=\bmu+\sigma/\sqrt{m}\g'\sim\Nc(\bmu,\sigma^2m\Iden)$, similar to \eqref{app non asymp bound 1}, with probability at least $1-e^{-t_3}$ (assuming $2d\leq t_3\leq m/4\sigma^2$), we can bound
\[
\|\hat\bmu_s\|\leq 1+\frac{\sigma}{\sqrt{m}}\|\g'\|\leq 1+3\sigma\sqrt{\frac{t_3}{m}}\leq 3.
\]
Then consider a significantly large $n$ (to ensure that $\|\Bal\|\leq1/12$, e.g., $n\geq (12C_1)^2t$). With probability at least $1-3e^{-\min(t,t_3)}$ and suppose that $\bmu^\top\hat\bmu_s>0.5$, we can bound
\begin{align*}
\left|\frac{\bmu^\top\A\hat\bmu_s}{\tn{\A\hat\bmu_s}}-\frac{\bmu^\top\bmu\bmu^\top\hat\bmu_s}{\tn{\bmu\bmu^\top\hat\bmu_s}}\right|&=\left|\frac{\bmu^\top(\Bal+\bmu\bmu^\top)\hat\bmu_s}{\tn{(\Bal+\bmu\bmu^\top)\hat\bmu_s}}-\frac{\bmu^\top\bmu\bmu^\top\hat\bmu_s}{\tn{\bmu\bmu^\top\hat\bmu_s}}\right|\\
&\leq\left|\frac{\bmu^\top\Bal\hat\bmu_s}{\min(\tn{(\Bal+\bmu\bmu^\top)\hat\bmu_s},\tn{\bmu\bmu^\top\hat\bmu_s})}\right| \\
&\leq\frac{\|\Bal\|\cdot\|\hat\bmu_s\|}{\bmu^\top\hat\bmu_s-\|\Bal\|\cdot\|\hat\bmu_s\|}\\
&\leq4\|\Bal\|\cdot\|\hat\bmu_s\|\\
&\leq C_2\sqrt{\frac{t}{n}}\quad\text{where}\quad C_2:=12 C_1.
\end{align*}

Now, we are ready to bound the classification error, where we get
\begin{align*}
\P\left(\hat y\neq y\right)&=\E_{\hat\bmu}\left[Q\left(\frac{\bmu^\top\A\hat\bmu_s}{\sigma\tn{\A\hat\bmu_s}}\right)\right]\\
&=\E_{\hat\bmu}\left[Q\left(\frac{1}{\sigma}+\frac{1}{\sigma}\left(\frac{\bmu^\top\A\hat\bmu_s}{\tn{\A\hat\bmu_s}}-\frac{\bmu^\top\bmu\bmu^\top\hat\bmu_s}{\tn{\bmu\bmu^\top\hat\bmu_s}}\right)\right)\right]\\
&\leq \P(\bmu^\top\hat\bmu_s>0.5)\left( Q\left(\frac{1-C_2\sqrt{t/n}}{\sigma}\right)+3e^{-\min(t,t_3)}\right)+\P(\bmu^\top\hat\bmu_s<0.5)\\
&\leq Q\left(\frac{1-C_2\sqrt{t/n}}{\sigma}\right)+3e^{-\min(t,t_3)}+Q\left(\frac{\sqrt{m}}{2\sigma}\right).
\end{align*}
Choosing $t=t_3=2d$, since $m/4\sigma^2\geq 2d$, we obtain
\begin{align*}
    \P\left(\hat y\neq y\right)&\leq Q\left(\frac{1-C_2\sqrt{2d/n}}{\sigma}\right)+3e^{-2d}+0.5e^{-d}\\
    &\leq Q\left(\frac{1-C_2\sqrt{2d/n}}{\sigma}\right)+e^{-d}.
\end{align*}
It completes the proof.
\end{proof}

\end{document}